\newtheorem{theorem}{Theorem}[section]
\newtheorem{lemma}[theorem]{Lemma}
\newtheorem{corollary}[theorem]{Corollary}
\newenvironment{proof}[1][Proof]{\textbf{#1.} }{\ \rule{0.5em}{0.5em}}
\begin{document}

\title{\textbf{Video Compressive Sensing for Dynamic MRI}}
\author{Jianing V. Shi$^{1,2}$\thanks{Corresponding author's email address: jianing@math.ucla.edu}, Wotao Yin$^{2}$, Aswin C. Sankaranarayanan$^{3}$, and Richard G. Baraniuk$^{1}$\\
\small{$^{1}$ Department of Electrical and Computer Engineering, Rice University}\\
\small{$^{2}$ Department of Mathematics, UCLA}\\
\small{$^{3}$ Department of Electrical and Computer Engineering, Carnegie Mellon University}
 }
\date{\today}
\maketitle

\begin{abstract}

%Learning compact representation of high-dimensional data with motion textures and computation on motion manifold is a task central to computer vision and machine learning.  We consider a specific type of motion manifold as linear dynamical systems, and present estimation of manifold parameters and computation on the motion manifold in the context of video compressive sensing.  Given compressive measurements, the state sequence can be first estimated by assuming a low rank model.  We then reconstruct the video on its motion manifold using a joint structured sparsity assumption.  In particular, we minimize an objective function with a mixture of wavelet sparsity and total variation, along with simultaneous sparsity among columns of the observation matrix.  Our main contribution is to derive an efficient convex optimization algorithm through alternating direction augmented Lagrangian method.  We demonstrate the performance of our approach for video compressive sensing, in terms of reconstruction accuracy and simulated phase transition.

We present a video compressive sensing framework, termed kt-CSLDS, to accelerate the image acquisition process of dynamic magnetic resonance imaging (MRI).  We are inspired by a state-of-the-art model for video compressive sensing that utilizes a linear dynamical system (LDS) to model the motion.  Given compressive measurements, the state sequence of an LDS can be first estimated using system identification techniques.  We then reconstruct the observation matrix using a joint structured sparsity assumption.  In particular, we minimize an objective function with a mixture of wavelet sparsity and joint sparsity within the observation matrix.  We derive an efficient convex optimization algorithm through alternating direction method of multipliers (ADMM), and provide a theoretical guarantee for global convergence. We demonstrate the performance of our approach for video compressive sensing, in terms of reconstruction accuracy.  We also investigate the impact of various sampling strategies.  We apply this framework to accelerate the acquisition process of dynamic MRI and show it achieves the best reconstruction accuracy with the least computational time compared with existing algorithms in the literature.
\end{abstract}

\section{Introduction}

Our fascination with detail has lead to sensors of ever increasing capabilities.  In many modalities, the traditional sampling theory associated with Nyquist sampling theorem fails to deliver high spatio-temporal resolution or at best, delivers this at prohibitive costs. Compressive sensing \cite{candes2006a, candes2006b, donoho2006} has recently arisen as a paradigm to revolutionize the design of sensors and signal processing \cite{baraniuk2011}, and is promised to deliver better sensors.  The key insight of compressive sensing is that one can design a sensing system that only acquires a few linear measurements and recover the signals via convex optimization or greedy pursuit.  Two assumptions are essential to the success of compressive sensing: a) the signal can be approximated using sparse representation under a suitable basis or dictionary; b) linear measurements are suitably incoherent with the basis or dictionary in which the signal is represented.  Compressive sensing enables one to acquire highly undersampled data during the acquisition process, with the sampling rate way below the Nyquist sampling frequency.

A prominent application of compressive sensing is to accelerate the acquisition process of magnetic resonance imaging (MRI).  In fact, the discovery of compressive sensing was largely motivated by the MRI problem \cite{candes2006a}, where one wishes to reconstruct an object based on incomplete Fourier samples.  The physical mechanism of MRI \cite{damadian1971, lauterbur1973, mansfield1977} requires scanning in the Fourier space in order to reconstruct an object.  The speed of imaging is fundamentally limited by physical constraints such as gradient amplitude and slew rate, as well as physiological constraints \cite{lustig2007}.  Compressive sensing has proven to be very successful in accelerating the acquisition process of MRI and has opened up many possibilities for new clinical applications \cite{lustig2007, lustig2008}.  Dynamic MRI reconstructs a dynamic sequence of images based on measurements in spatial frequency versus time (k-t) domain, roughly speaking, video acquired in the Fourier space.  As pointed out in \cite{lustig2008}, dynamic MRI is challenging due to the time-varying nature of the imaging object and the spatio-temporal tradeoff.  Moreover, the discrepancy between the dynamic nature of the moving object and a static scene assumption for sensing creates a spatio-temporal recovery error \cite{sankaranarayanan2012, park2013}. To tackle such a challenge, we leverage ideas from state-of-the-art video compressive sensing frameworks.

Video compressive sensing is nontrivial due to its high-dimensional representation and the ephemeral nature of videos.  In order to achieve recovery using as few samples as possible, it requires one to exploit the redundancy in the ambient space and go beyond a frame-by-frame reconstruction \cite{wakin2006}.  The general philosophy is to identify a model in which signal can be represented parsimoniously, and identify the basis or dictionary to sparsify the signal under the signal model.  Several signal models have been employed to perform video compressive sensing.  One early approach considered sparse representation in both spatial and temporal domains by treating video as a three-dimensional matrix, and employed 3D wavelet transform to sparsify the video \cite{wakin2006}.  A later approach took advantage of the small inter-frame differences together with spatial 2D wavelet transform within each frame, which was implemented in the compressive coded aperture video camera \cite{marcia2008}.  Further work along the video coding ideas sought to reconstruct individual frames based on wavelet sparsity in the spatial domain, while modeling temporal dependencies between frames using motion compensation methods, such as lifting based wavelet sparsity \cite{secker2003} and optical flow \cite{reddy2011}.  Multi-scale recovery algorithms along with various motion compensation mechanisms were investigated in \cite{park2009, sankaranarayanan2012, park2013}.  Work based on the separation of background and moving objects were investigated in \cite{cevher2008}.  Video compressive sensing models based on the entire image typically involve a dense measurement matrix, which is computationally expensive, therefore block-based video compressive sensing \cite{fowler2012} divided each frame into small blocks in order to accelerate computation.  Dictionary learning-based methods were proposed to identify task specific basis for compressive sensing reconstruction \cite{prades2009, chen2010, rajwade2012}.  Another approach of exploiting the redundancy was to consider motion manifold and build a global model for the video cube.  The key idea is to project the original video cube onto a motion manifold, and perform reconstruction within a low-dimensional space.  Motion manifold models arise in many computer vision and machine learning problems, such as dynamical textures modeling \cite{doretto2003, chan2008}, human activity tracking \cite{bissacco2001, veeraraghavan2005}, video-based face recognition \cite{aggarwal2004}, data-driven motion synthesis \cite{lee2006}, video compressive sensing \cite{sankaranarayanan2010}, coded strobing photography \cite{veeraraghavan2011}.  A key promise of the motion manifold model is to obtain a compact representation of high-dimensional data by exploring the spatio-temporal structures, hence enabling computation on the low-dimensional manifold.

Prior work on compressively sensed dynamic MRI include k-t SPARSE \cite{lustig2006}, k-t FOCUSS \cite{jung2009}, Modified CS \cite{lu2009}, MASTeR \cite{asif2012}, Subtraction Sparsity \cite{rapacchi2013}, and L$+$S reconstruction \cite{otazo2013}.  k-t SPARSE and k-t FOCUSS both use the wavelet transform to model sparsity in the temporal domain.  Modified CS identifies signal support in the first frame to facilitate reconstruction of the rest video frames using compressive sensing.  MASTeR uses motion adaptive spatio-temporal regularization to perform reconstruction based on compressive k-t data.  Subtraction Sparsity is designed for contrast-enhanced magnetic resonance angiography, which subtracts a pre-contrast mask from all post-contrast frames to promote sparsity in the resulting difference images. L$+$S reconstruction uses a low rank and sparse matrix decomposition to separate background and dynamic components.

In this paper, we have chosen to build upon CS-LDS \cite{sankaranarayanan2010}, a video compressive sensing approach for time varying signals modeled as a linear dynamical system (LDS).  Encouraged by its high-fidelity reconstruction quality for a wide arrange of videos and its ability to achieve high compression rates, we extend the CS-LDS model to the k-t domain. We propose a novel and efficient algorithm, which we call kt-CSLDS.  Our proposed algorithm takes advantage of the orthonormal property of the Fourier operator and uses a number of numerical techniques to achieve high computational efficiency.  We provide theoretical guarantee for its global convergence.  We use the kt-CSLDS model to accelerate the image acquisition process of dynamic MRI.  Finally, we investigate the impact of sampling strategies on the reconstruction quality of dynamic MRI.

\section{Compressive Sensing Dynamic MRI Model}

Learning a low-dimensional signal model based on video data is an important topic in computer vision, signal processing and machine learning.  Linear dynamical systems (LDSs) are a particularly useful model which builds a compact representation of the spatial and temporal variations in image sequences.  This arises in a range of applications including object recognition, video segmentation, and video synthesis.  We are primarily motivated by video synthesis, since it aligns perfectly with goal of compressive sensing.  Our video compressive sensing model is largely inspired by CS-LDS \cite{sankaranarayanan2010}, which couples compressive sensing with linear dynamical systems to perform video compressive sensing.  Since we are interested in accelerating the image acquisition process of MRI, we extend the CS-LDS model from the spatial domain to the Fourier domain.

\subsection{Notations}

We clarify the notations in this section.  A video can be denoted by a 3D tensor $Y^3 \in \mathbb{R}^{n_x \times n_y \times l}$.  For ease of notation, we vectorize each frame of the video and represent it using 2D matrices; hence, for the rest of the paper, we represent videos as $Y \in \mathbb{R}^{n \times l}$, where $n = n_x n_y$.
We use $X \in \mathbb{R}^{d \times l}$ to denote the state sequence over time, where at each time $t$ the state vector is $\mathbf{x}_t$. We denote the compressive sensed k-t video cube as $Z \in \mathbb{R}^{m \times l}$, where $m$ is the number of measurements. 

We use different notations for row space and column space of matrices.  In particular, for the observation matrix $C \in \mathbb{R}^{n \times d}$, we define the row vector and column vector as follows.  We denote each row of $C$ using a row vector $\mathbf{c}_{(i)} := \mathbf{e}_{(i)} C$, where $\mathbf{e}_{(i)} = (0,\hdots,1\hdots,0)^{\top}$ is a row selector with the $i$th element being 1, $i = 1,2,\hdots,n$.  Similarly, we denote each column of $C$ using a column vector $\mathbf{c}_j := C \mathbf{e}_j$, where $\mathbf{e}_j = (0,\hdots,1,\hdots,0)$ is a column selector with the $j$th element being 1, $j = 1,2,\hdots,d$.

We use $\psi_2$ to denote the 2D wavelet transform.  We define the following operator to denote frame-by-frame wavelet transform for each frame of the video cube $C \in \mathbb{R}^{n \times d}$ as
\begin{equation}
\Psi \mathbf{c}_j = \Psi (\mathbf{c}_j) := \mathrm{vec}(\psi_2 \mathbf{c}^2_j),
\label{eq:waveletopt}
\end{equation}
where $\mathbf{c}^2_j$ denotes matricization of the 3D tensor $\mathbf{c}_j$ by collapsing the first two dimension, resulting in a 2D matrix,
\begin{equation}
\mathbf{c}^2_j = \mathrm{mat}(\mathbf{c}_j),
\end{equation}
for $j = 1,2, \hdots,d$.  

With such a notation, the frame-by-frame wavelet transform for the entire video cube can be represented by
\begin{equation}
\Psi(C) =
\begin{pmatrix}
| & | &  & | \\
\mathrm{vec}(\psi_2 \mathbf{c}^2_1) & \mathrm{vec}(\psi_2 \mathbf{c}^2_2) & \hdots & \mathrm{vec}(\psi_2 \mathbf{c}^2_d) \\
| & | &  & | \\
\end{pmatrix}.
\label{eq:waveletoptvideo}
\end{equation}

%In particular for the observation matrix $C \in \mathbb{R}^{n \times d}$, we use $\mathbf{c}_{j}$ to denote column vectors indexed by $j = 1,2,\cdots,d$, and $\mathbf{c}_{(i)}$ to denote row vectors indexed by $i = 1,2,\cdots,n$.

\subsection{Compressive Sensed Dynamic MRI}

We first introduce the signal model for compressive sensing dynamic MRI.  Dynamic MRI imaging typically takes measurements of a moving object $\mathbf{y}_t \in \mathbb{R}^n$ in the Fourier space, which result in a sequence of Fourier measurements $\mathbf{z}_t \in \mathbb{R}^n$,
\begin{equation}
\mathbf{{z}}_t = \mathcal{F} \mathbf{y}_t + \xi_t,
\end{equation}
where $\xi_t \in \mathbb{R}^n$ is the measurement noise.
Traditional imaging system takes the full Fourier space samples and reconstructs a video using inverse Fourier transform,
\begin{equation}
\mathbf{\hat{y}}_t = \mathcal{F}^{-1} \mathbf{z}_t.
\end{equation}
With the traditional imaging sequence, the sampling rate needs to satisfy the Nyquist sampling theorem, which fundamentally limits the temporal resolution of the dynamic MRI imaging.

Now with compressive sensing, we take partial measurements in the k-t space and increase the temporal resolution of the imaging acquisition process,
\begin{equation}
\mathbf{z}_t = \Phi_t \mathcal{F} \mathbf{y}_t + \xi_t,
\end{equation}
where $\Phi_t \in \mathbb{R}^{m \times n}$ is the measurement matrix, and $\mathbf{z}_t \in \mathbb{R}^{m}$ represents partial Fourier measurements.  Note $\Phi_t$ is a row selector in the Fourier space, and takes the form of a subsampled identity matrix.  Our goal is recover $\mathbf{y}_t$ based on measurement matrix and partial Fourier measurements.

\subsection{Linear Dynamical Systems}

Linear time-invariant dynamical systems (LDS) can be expressed using three components: 1) an observation model that defines the state space and the observation matrix linking observations to state space, 2) state transition model that captures dynamics on the state space, and 3) an initial condition.  

More specifically, the discrete form of LDS can be expressed as
\begin{subequations}
\label{eq:lds}
\begin{align}
\mathbf{y}_{t}  & = C \mathbf{x}_{t} + \mathbf{\omega}_{t} & \hspace{5mm} & \mathbf{\omega}_t \sim \mathcal{N}(0,Q) \\
\mathbf{x}_{t+1} & = A \mathbf{x}_{t} + \mathbf{\nu}_{t} & \hspace{5mm} & \mathbf{\nu}_t \sim \mathcal{N}(0,R)
\end{align}
\end{subequations}
at each time instant ($t = 1,2,\cdots,l$), together with the initial condition $x_0$.  In the above, $\mathbf{y}_t \in \mathbb{R}^{n}$ represents the observation (in our case, the original video frames), $\mathbf{x}_t \in \mathbb{R}^{d}$ are the hidden states, $C \in \mathbb{R}^{n \times d}$ is the observation matrix, $A \in \mathbb{R}^{d \times d}$ is the transition matrix.  $\mathbf{\omega}_t \in \mathbb{R}^{n}$ is the process noise, which include excitation driving the stochastic process and error in the Markov model.  $\mathbf{\nu}_t \in \mathbb{R}^{n}$ is the observation noise, modeling inaccuracies in the LDS model.

We first illustrate the concept of LDS model using a sample video, see Figure~\ref{fig:heartlds}(A).  With such a model, observations $\mathbf{y}_t$ can be represented as linear transformation of the state $\mathbf{x}_t$, corrupted by observation noise, whereas the states $\mathbf{x}_t$ evolve according to a first-order Markov process corrupted by process noise.  The noise terms $\mathbf{\omega}_t$ and $\mathbf{\nu}_t$ are assumed to be temporally white, independent of each other, the states and the observations.  If the noises are assumed to be zero-mean Gaussian spatially, then the LDS model corresponds to a first-order Gaussian Markov random process.  We focus on the Gaussian noise case throughout this paper, with $\mathbf{\omega}_t \sim \mathcal{N}(0,Q)$ and $\mathbf{\nu}_t \sim \mathcal{N}(0,R)$.  

\begin{figure}
\begin{center}
\includegraphics[width = 0.98\textwidth]{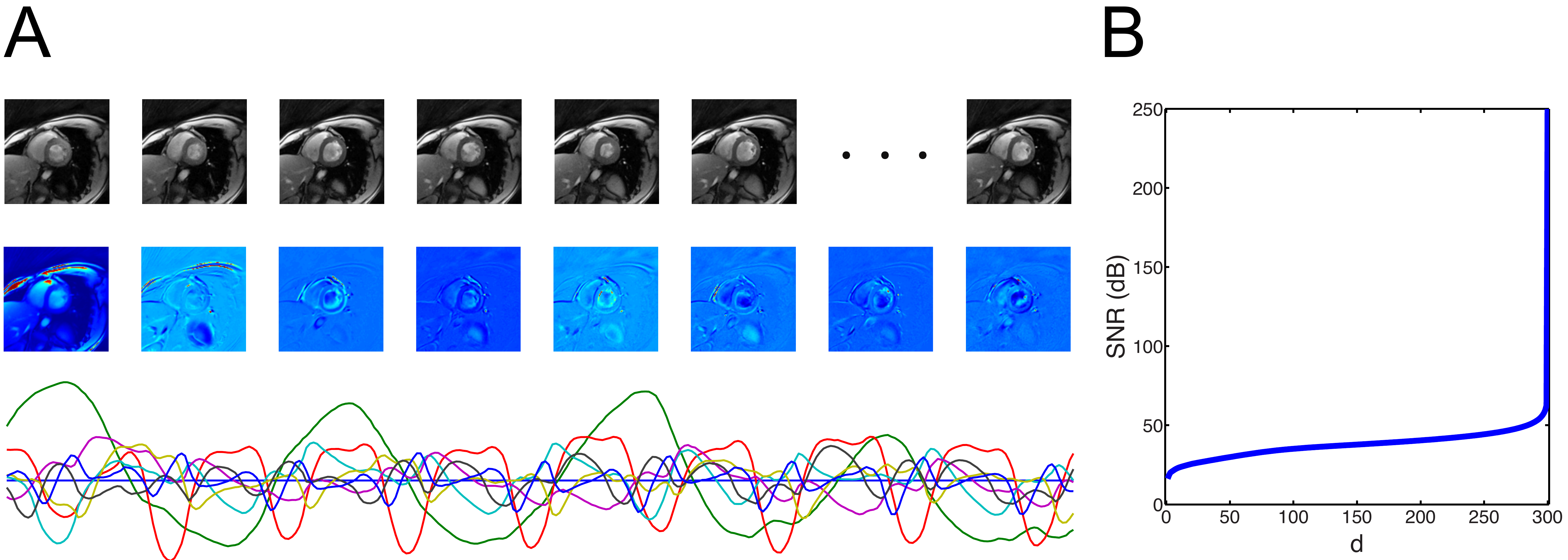}
\end{center}
\caption{(A) Illustration of the LDS model for a sample heart video: (top) sequence of individual frames of a video $Y$, (middle) observation matrix $C$, (bottom) state sequence $X$.  (B) LDS as a good approximation of the original video, where we plot SNR as a function of $d$.}
\label{fig:heartlds}
\end{figure}

 %which is nevertheless a signal-dependent property, motion manifold is a good model to reduce the high-dimensional video cube into a low-dimensional representation. %We note that for periodic or quasi-periodic signals, LDS is a perfect model; even if the assumption does not hold.
%As long as the the signal is quasi-stationary, LDS model is a good representation [NEED REFERENCE].

%We add a compressive measurement model on top of the LDS model, which will be discussed in the next section.

In the case where $d \ll n$, the motion manifold is a good model to reduce the high-dimensional video cube into a low-dimensional representation.  We illustrate such a concept in Figure~\ref{fig:heartlds}(B).  The key promise of using LDS relies on the assumption that high-dimensional signal $\mathbf{y}_t \in \mathbb{R}^n$ can be faithfully represented using low-dimensional state sequence $\mathbf{x}_t \in \mathbb{R}^d$, with $d \ll n$.

Given $Y = (y_1,y_2,\hdots,y_l)$, we can obtain a $d$-dimensional LDS approximation of the original video cube $Y$ through SVD, and measure the accuracy of such an approximation.  When we have an estimate of $Y$, $\hat{Y} = C(d)X(d)$, the reconstruction SNR of $\hat{Y}$ is given as
\begin{equation}
\textrm{SNR} = 10 \log_{10} \frac{\| Y \|^2_{F}}{\| \hat{Y} -Y \|^2_{F}},
\end{equation}
which is a function of $d$.  We obtain reasonably good SNR even at low $d$, as shown in Figure~\ref{fig:heartlds}(B).  %For the heart video we have illustrated in Figure~\ref{fig:heartlds}, $d = 4$ is sufficient to approximate the original video.

\subsection{Compressive Measurement Model}

Now with compressive measurements in the Fourier space, we have
\begin{subequations}
\begin{align}
\mathbf{z}_t &= \Phi_t \mathcal{F} C \mathbf{x}_{t} + \mathbf{\omega}_{t} & \hspace{5mm} & \mathbf{\omega}_t \sim \mathcal{N}(0,Q) \\
\mathbf{x}_{t+1} &= A \mathbf{x}_{t} + \mathbf{\nu}_{t} & \hspace{5mm} & \mathbf{\nu}_t \sim \mathcal{N}(0,R),
\end{align}
\end{subequations}
where $\mathcal{F}$ is the multi-dimensional Fourier transform in the image domain $\Omega$.  $\Phi_t \in \mathbb{R}^{m \times n}$ is the measurement matrix, essentially the row selector which stipulates where we sample in the $k$ space during the image acquisition process,
\begin{equation*}
\Phi_t = \begin{pmatrix}
1 & 0 & 0 & \cdots & 0 \\
0 & 0 & 1 & \cdots & 0 \\
\vdots & \vdots & \vdots &\ddots & \vdots \\
0 & 0 & 0 & \cdots & 1 \\
\end{pmatrix}.
\end{equation*}

Our measurement model is comprised of two components, a time-invariant component and a time-variant component, as proposed in \cite{sankaranarayanan2010, sankaranarayanan2013},
\begin{equation}
\mathbf{z}_t =
\begin{pmatrix}
\bar{\mathbf{z}}_t \\
\tilde{\mathbf{z}}_t \\
\end{pmatrix} =
\begin{pmatrix}
\bar{\Phi} \mathcal{F} \\
\tilde{\Phi}_t \mathcal{F} \\
\end{pmatrix} \mathbf{y}_t.
\end{equation}
Note that $\bar{\Phi}$ is the time-invariant component, which is designed to facilitate estimation for the state sequence.  $\tilde{\Phi}_t$ is the time-variant component to allow innovation during the acquisition process, due to the ephemeral nature of videos.

The overall architecture of kt-CSLDS is shown in Figure~\ref{fig:manifoldlds}, where we take compressive measurements of the original video signal.  Each frame of the video is a high-dimensional signal is projected to a low-dimensional representation that follows a Markov process. 

\begin{figure}
\begin{center}
\includegraphics[width = 0.98\textwidth]{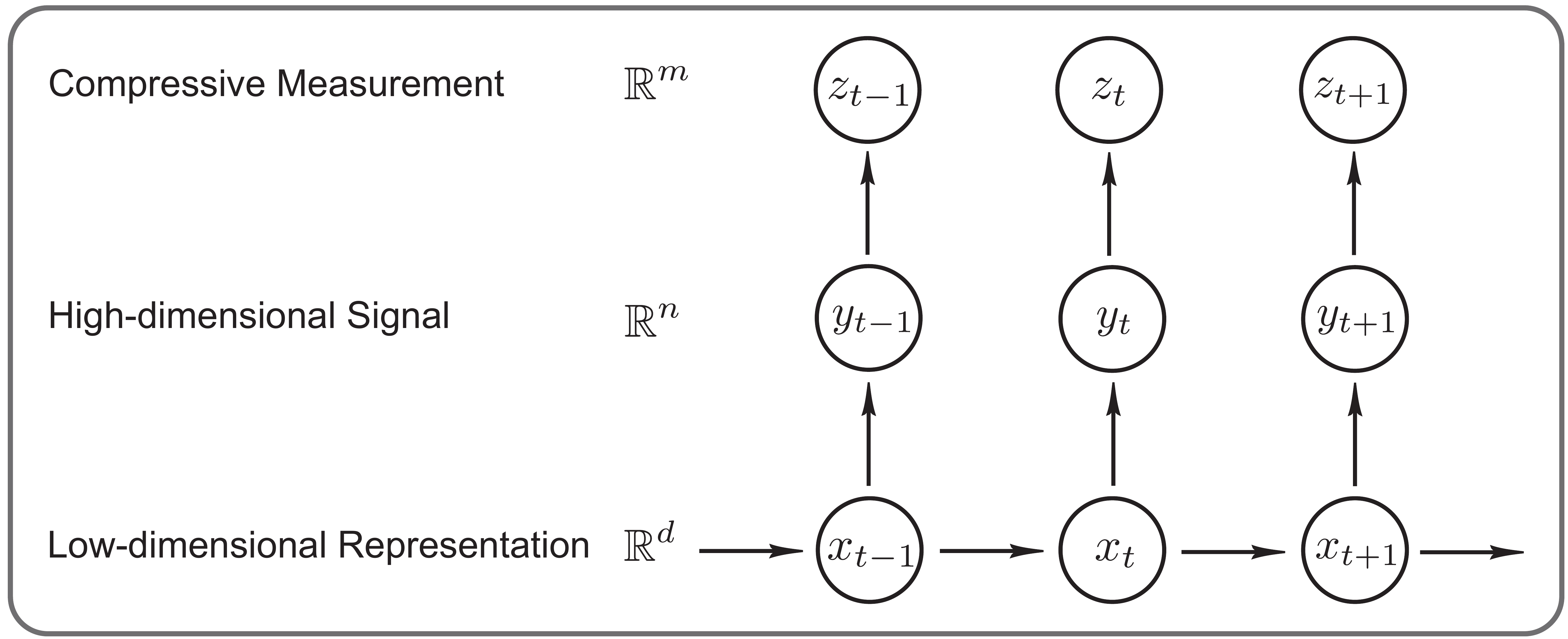}
\end{center}
\caption{Architecture of the kt-CSLDS model. High-dimensional signal is projected to a low-dimensional representation that follows a Markov process.  The compressive measurements are observations drawn from the high-dimensional signal.}
\label{fig:manifoldlds}
\end{figure}

The design of the measurement model is critical to the success of video compressive sensing.  The time-invariant component needs to satisfy the observability condition for LDS.  In the Fourier domain, the low-frequency content of a video changes marginally, while the high-frequency content changes more drastically from frame to frame.  We therefore sample the low-frequency content more densely.  Note in both kt-FOCUSS and MASTeR, low-frequency domain is sampled in a similar fashion.  The time-variant component is designed to satisfy the incoherence assumptions for compressive sensing, promoting randomness of the Fourier samples in the temporal domain.  Within each frame, we sample the Fourier space according to the theoretical results obtained by \cite{krahmer2012}.

\section{Compressive Sensing Reconstruction Algorithm}

\subsection{Algorithm Flow}

The overall architecture of imaging acquisition and video reconstruction process is summarized in Figure~\ref{fig:heartarchitect}.  Upon acquiring Fourier samples using the aforementioned sampling strategy, we reconstruct the video frames for dynamic MRI via a two-step procedure.

We first estimate the state sequence $\hat{X}$ based on the time-invariant samples using system identification.  We then reconstruct the observation matrix $\hat{C}$ based on both time-invariant and time-variant measurements by exploring certain sparsity assumptions.  The final video reconstruction can be obtained by
\begin{equation}
\hat{Y} = \hat{C} \hat{X}.
\end{equation}

\begin{figure}
\begin{center}
\includegraphics[width = 0.98\textwidth]{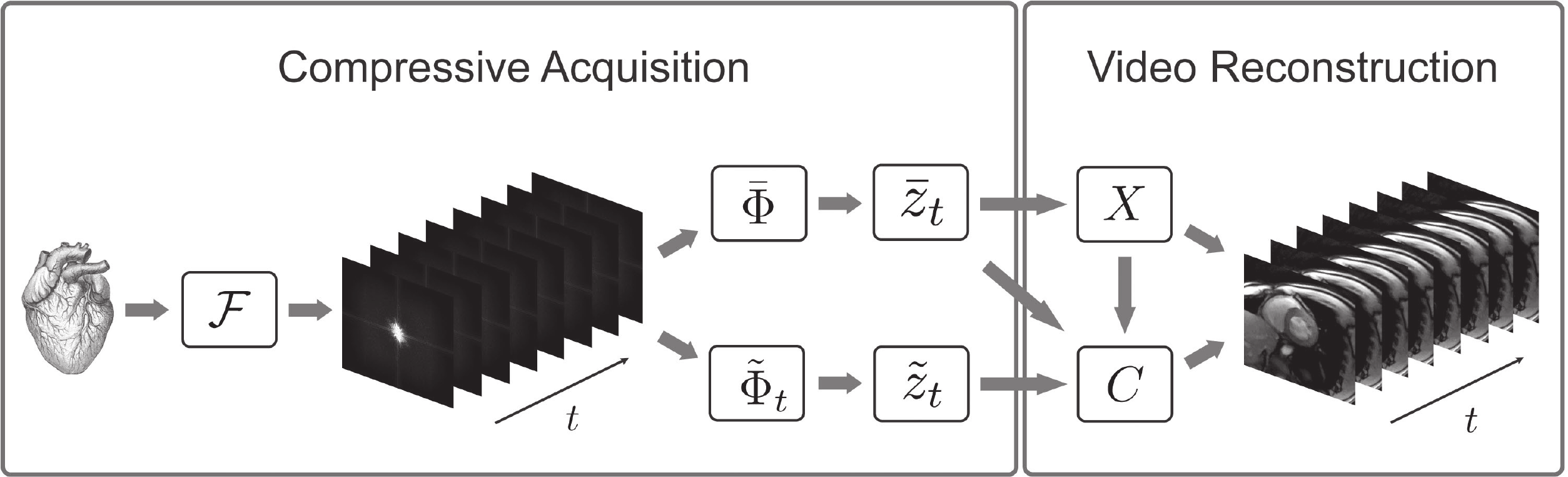}
\end{center}
\caption{Architecture of the kt-CSLDS imaging and reconstruction processes.  During the imaging acquisition process of dynamic MRI, compressive samples are obtained in the Fourier space.  We then perform video reconstruction based on the compressive measurements.}
\label{fig:heartarchitect}
\end{figure}

Here we set up the general framework for algorithm flow, whose details will be discussed in the following two sections.  In Section~\ref{stateseq} we discuss two different methods for state sequence estimation.  In Section~\ref{obsrecon}, we derive an efficient numerical algorithm for observation matrix reconstruction and prove its theoretical convergence.

\subsection{State Sequence Estimation}
\label{stateseq}

State sequence $\mathbf{x}_t$ can be estimated based on the time-invariant component,
\begin{subequations}
\label{eq:ldsti}
\begin{align}
%\bar{\mathbf{z}}_t &= \bar{\Phi} \mathcal{F} C \mathbf{x}_t + \mathbf{\omega}_t & \hspace{1mm} & \mathbf{\omega}_t \sim \mathcal{N}(0,Q) \\
%\mathbf{x}_{t+1} &= A \mathbf{x}_t + \mathbf{\nu}_t & \hspace{1mm} & \mathbf{\nu}_t \sim \mathcal{N}(0,R)
\bar{\mathbf{z}}_t &= \bar{\Phi} \mathcal{F} C \mathbf{x}_t + \mathbf{\omega}_t & \mathbf{\omega}_t \sim \mathcal{N}(0,Q) \\
\mathbf{x}_{t+1} &= A \mathbf{x}_t + \mathbf{\nu}_t & \mathbf{\nu}_t \sim \mathcal{N}(0,R).
\end{align}
\end{subequations}
Since that Fourier transform is a linear operator, this sub-system is a linear time-invariant system.  Given that there is no input data, the system is driven purely by stochastic noise.

%\begin{definition}
Given compressive measurements of video data $\bar{Z} = (\bar{z}_1, \bar{z}_2, \hdots, \bar{z}_t, \hdots, \bar{z}_{l})$, we assume they are generated by the time-invariant LDS described in Eq.~\eqref{eq:ldsti}.  Our goal is to determine the order of the unknown system $d$, and the forward Kalman filter state sequence $\hat{X} = (x_1, x_2, \hdots, x_t, \hdots, x_{l})$ up to a similarity transformation.
%\end{definition}

There exist many ways to perform state inference and system identification for the LDS model.  State inference refers to the process of estimating hidden states over time $\hat{X} := (x_1,\hdots,x_t,\hdots,x_{l})$ given the observations $\bar{Z} := (\bar{z}_1,\hdots,\bar{z}_t,\hdots,\bar{z}_l)$ and parameters $\theta := \{A,C,Q,R\}$.  On the other hand, system identification involves finding the parameters $\theta$ and the distribution over hidden states $p(X|\bar{Z},\theta)$ that maximizes the likelihood of the observed data $\bar{Z}$.

When one uses the recursive formulation as stated in \eqref{eq:ldsti}, it has a connection to the Kalman filter.  The stochastic LDS models the distribution of outputs $p(\bar{z}_{1:l})$, and the inference problem for LDS aims to estimate the distribution over hidden states $p(x_t | \bar{z}_{1:l})$.  The inference can be carried out recursively, by combining a forward pass and a backward pass.  The forward pass takes the initial state $x_0$ together with a collection of observation states $y_{1:t}$, and computes $x_t$ recursively, resulting in the Kalman filter.  The backward pass takes the observations from $\bar{z}_{l}$ to $\bar{z}_{t+1}$, and corrects the results from the forward pass by evaluating the influence of future observations, which is also known as the Rauch-Tung-Striebel (RTS) equation.

\subsubsection{Review of System Identification}

Generally speaking, there are two types of methods for system identification.  One approach is to obtain parameters $\theta$ and distribution $p(X|\bar{Z},\theta)$ using maximum likelihood solution through iterative techniques such as expectation maximization (EM).  The EM approach utilizes the Kalman filter and Kalman smoother, which requires the entire observation sequences.  EM guarantees convergence to a local maximum in the likelihood surface, and is sensitive to initial condition.  Another approach for system identification is to use subspace methods to obtain solutions, which is known as subspace identification.

In subspace identification, it is desirable to find the minimal model order for the state space representation under the constraint that the reduced model approximates the output data.  It is well-known that the minimal order is equal to the rank of the block Hankel matrix, defined as
\begin{equation}
\label{eq:hankel}
\mathcal{H}_{d}(z) =
\begin{pmatrix}
z_{1} & z_{2} & \cdots & z_{l-d+1} \\
z_{2} & z_{3} & \cdots & z_{l-d+2} \\
\vdots & \vdots & \ddots & \vdots \\
z_{d} & z_{d+1} & \cdots & z_{l}
\end{pmatrix}.
\end{equation}
Therefore, subspace identification methods typically exploit the rank of block Hankel matrix, and relies on matrix decomposition to obtain state sequence estimates $\hat{X}$ as well as the realization of the state space model parameterized by $\{A,C,Q,R\}$.  Algorithm details vary among different subspace identification methods: a) construction of the block Hankel matrices differ depending on whether it is covariance driven or data driven, b) the matrix decomposition methods vary using different user defined weighting matrices in the projection methods.  In several classical subspace identification algorithms, oblique projection is employed when there exist both output data and input data, which reduces to orthogonal projection in the case of stochastic identification.  These algorithms include principle component analysis (PCA), unweighted principle component analysis (UPCA) \cite{arun1990}, canonical variate analysis (CVA) \cite{larimore1990}, numerical algorithms for subspace identification (N4SID) \cite{overschee1994}, and multivariate output error state space (MOESP) \cite{verhaegen1994}.  These methods can determine the order of the system for the state space model, and estimate system matrices $\{A,C\}$ up to a similarity transformation.  Note that subspace identification methods mentioned above use orthogonal projections and can be computationally expensive.  It is noteworthy that N4SID provides asymptotically optimal solution for the forward Kalman filter state sequence $\hat{X}$, in the sense of maximum likelihood.  However, the memory storage and computation requirement of N4SID and other subspace methods are prohibitively expensive for video data.  As pointed out in \cite{doretto2003}, under mild conditions, one can obtain a closed-form solution for a canonical model realization.  We adopt such a strategy in this paper in favor of its algebraic simplicity and computational efficiency.

\subsubsection{Canonical Model Realization}

In stochastic identification, the goal is typically to determine the system matrices $\theta = \{A,C,Q,R\}$ up to a similarity transformation.  Obtaining $\{A,C,Q,R\}$ is also called a \emph{realization} of the system.  The ambiguity of the system identification is well-known, in a sense that there is no unique choice of system matrices which can generate the same sample path given suitable initial condition.  As long as $T \in \mathbb{R}^{d \times d}$ is invertible, one can generate the same dynamics by substituting $A$ with $TAT^{-1}$, $C$ with $CT^{-1}$, $Q$ with $TQT^{-1}$, and initial condition $x_0$ with $Tx_0$.  Given the bilinear product between $C$ and $x_t$, it immediately follows that any estimate of the forward Kalman filter state sequence $\hat{X}$ is accurate only up to a similarity transformation.

In order to obtain a unique realization for the LDS, one chooses a representative from these equivalent solutions, which results in the so-called \emph{canonical model realization} \cite{doretto2003}.  This can be achieved by imposing additional constraints or imposing regularization on the solution.

\subsubsection{Canonical Model Realization with Truncated SVD}

\begin{algorithm}[t]
\caption{State Sequence Estimate with SVD}
\label{alg:frx}
\begin{algorithmic}
\STATE{\textbf{Input:}} Fourier measurements $\bar{Z}$ acquired by time-invariant measurement matrix $\bar{\Phi}$ and system order $d$
\STATE \textbf{1}$^{\circ}$ Formulate block Hankel matrix $\mathcal{H}_d (\bar{z})$
\STATE \textbf{2}$^{\circ}$ Perform SVD $\mathcal{H}_{d}(z) = U \Sigma V^{\top}$
\STATE \textbf{3}$^{\circ}$ Keep the first $d$ eigensystems $U_d$, $\Sigma_d$, $V_d$
\STATE \textbf{4}$^{\circ}$ Calculate state sequence estimate $\hat{X} = \Sigma_d V_d^{\top}$
\end{algorithmic}
\end{algorithm}

\begin{corollary}[Canonical model realization, \cite{doretto2003}]
\label{co:cmr}
Suppose one has access to the full video data in the spatial domain $Y = (y_1, y_2, \hdots, y_t, \hdots, y_l)$.  Assuming the observation matrix $C \in \mathbb{R}^{n \times d}$ of the canonical model has orthonormal columns, i.e., $C^{\top}C = I$, then one can obtain a closed-form solution for the forward Kalman filter state sequence $\hat{X}$ and canonical model realization $\hat{C}$ based on the simplest form of Hankel matrix $\mathcal{H}_{1,l}$.
\end{corollary}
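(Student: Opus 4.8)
The plan is to reduce the statement to a truncated singular value decomposition of the raw data matrix and then argue that the two resulting factors are precisely the canonical realization. First I would observe that the ``simplest'' Hankel matrix $\mathcal{H}_{1,l}(y)$ is nothing but the data matrix $Y = (y_1,\hdots,y_l)\in\mathbb{R}^{n\times l}$, and that under the observation equation $\mathbf{y}_t = C\mathbf{x}_t + \mathbf{\omega}_t$ we have $Y = CX + W$, with $W$ the stacked observation noise. Since $C$ has rank $d$ and $X\in\mathbb{R}^{d\times l}$, the product $CX$ has rank at most $d$, so the maximum-likelihood estimate of $(C,X)$ — equivalently the least-squares estimate, because $\mathbf{\omega}_t$ is white Gaussian — is obtained by solving $\min\|Y-CX\|_F^2$ over matrices of rank at most $d$. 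By the Eckart--Young theorem this minimizer is the truncated SVD $U_d\Sigma_d V_d^{\top}$, where $Y = U\Sigma V^{\top}$.

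Next I would peel off the factorization consistent with the orthonormality constraint. Writing $U_d\Sigma_d V_d^{\top} = \hat{C}\hat{X}$ with $\hat{C} := U_d$ and $\hat{X} := \Sigma_d V_d^{\top}$, the columns of $\hat{C}$ are orthonormal, $\hat{C}^{\top}\hat{C} = U_d^{\top}U_d = I$, so this choice is admissible. For uniqueness I would invoke the residual similarity ambiguity already discussed in the text: any other admissible pair $(\hat{C}',\hat{X}')$ with $\hat{C}'\hat{X}' = U_d\Sigma_d V_d^{\top}$ and $(\hat{C}')^{\top}\hat{C}' = I$ must satisfy $\hat{C}' = U_d O$ for some orthogonal $O\in\mathbb{R}^{d\times d}$ and $\hat{X}' = O^{\top}\Sigma_d V_d^{\top}$; ordering the singular values decreasingly and fixing a sign convention on the columns of $U_d$ then singles out the canonical representative. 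This is exactly the content of Algorithm~\ref{alg:frx}: form $\mathcal{H}_d$ (here with a single block row), take its SVD, retain the top $d$ triplets, and set $\hat{X} = \Sigma_d V_d^{\top}$, $\hat{C} = U_d$.

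The remaining, and more delicate, step is to certify that $\hat{X} = \Sigma_d V_d^{\top}$ is the \emph{forward Kalman filter} state sequence and not merely some state sequence realizing the data. For this I would appeal to the identification theory cited above (N4SID/MOESP and the closed-form realization of Doretto et al.): in the canonical coordinates just fixed, the state produced by the forward Kalman recursion coincides — in the noise-free limit and asymptotically in $l$ — with the projection of future outputs onto the past, which the SVD of $Y$ computes directly. Closing this gap rigorously is where the argument needs the most care, since one must check that collapsing the block Hankel matrix to the simplest form $\mathcal{H}_{1,l}$ loses no information once $C$ is constrained to have orthonormal columns. Once $\hat{C}$ and $\hat{X}$ are available, the remaining parameters follow in closed form — $\hat{A}$ by least squares on the shifted state sequences and $\hat{Q},\hat{R}$ from the residuals — completing the realization.
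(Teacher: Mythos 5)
Your proof takes essentially the same route as the paper's: identify the simplest Hankel matrix $\mathcal{H}_{1,l}(y)$ with the data matrix $Y = CX + W$, solve the rank-$d$ least-squares factorization via the (truncated) SVD, and let the orthonormality constraint $C^{\top}C = I$ single out $\hat{C} = U_d$, $\hat{X} = \Sigma_d V_d^{\top}$. You are in fact somewhat more explicit than the paper --- invoking Eckart--Young by name, spelling out the residual orthogonal ambiguity, and honestly flagging that the identification of $\Sigma_d V_d^{\top}$ with the \emph{forward Kalman filter} state sequence requires further argument --- a gap the paper's own proof also leaves open by deferring to \cite{doretto2003}.
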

\begin{proof}
Once formulating the simplest form of block Hankel matrix $\mathcal{H}_{1}(y)$, we note the following relationship with state sequence $X = (x_1,x_2,\hdots,x_t,\hdots,x_l)$ and noise term $W = (\omega_1,\omega_2,\hdots,\omega_t,\hdots,\omega_l)$,
\begin{equation}
\mathcal{H}_{1}(y) = C X + W. \nonumber
\end{equation}
The estimation for observation matrix $C$ and state sequence $X$ can be formulated as
\begin{equation}
\hat{C}, \hat{X} = \arg \min_{C,X} \| \mathcal{H}_{1,l}(y) - C X \|_{F}^{2}. \nonumber
\end{equation}
Given the bilinear product between $C$ and $X$, one can immediately inspect that the solution is not unique.  Since we have imposed additional constraint $C^{\top}C = I$, one can obtain a canonical model realization.  It follows from the fixed rank property of SVD \cite{golub1989} that a unique closed-form solution can be obtained through the SVD:
\begin{equation}
\mathcal{H}_{1}(y) = U \Sigma V^{\top} \hspace{4mm} \text{where} \hspace{4mm} U^{\top}U = I \hspace{4mm}V^{\top}V = I. \nonumber
\end{equation}
Note the system order $d$ can be determined from the rank of the block Hankel matrix, which leads to the following
\begin{equation}
\hat{C} = U \hspace{4mm} \hat{X} = \Sigma V^{\top},
\end{equation}
where both estimates are closed-form solutions.
\end{proof}

The above result sheds light on a simpler path of estimating state sequence $\hat{X}$, without the computational burden of system identification methods.  We remark that canonical model realization based on the simplest form of Hankel matrix \cite{doretto2003} essentially does not exploit the structure of LDS.  Once we consider a higher degree Hankel matrix, the structure and observability of LDS comes into consideration.

Formally, an LDS is said to be observable if, for any possible state sequence, the current state can be determined in finite time using only the outputs.  Less formally, observability refers to the idea that it is possible to determine the behavior of the entire system based on merely the system's outputs.  Conversely, an LDS is said to be unobservable if the current values of some states cannot be determined through output sensors.  There exists a convenient test for observability.

\begin{lemma}[Observability, \cite{brockett1970}]
For an LDS, equipped with system matrices $(C,A)$ and state space dimension $d$, the system is observable if the observability matrix
\begin{equation}
\mathcal{O}(C,A) =
\begin{pmatrix}
C \\
CA \\
CA^2 \\
\vdots \\
CA^{d-1} \\
\end{pmatrix}
\end{equation}
is full rank.
\end{lemma}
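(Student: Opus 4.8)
The plan is to prove the Kalman rank condition directly from the definition of observability given just before the lemma: observability means the state can be reconstructed in finite time from the outputs alone, and I will show that full column rank of $\mathcal{O}(C,A)$ makes this possible over a horizon of exactly $d$ steps. First I would discard the stochastic terms of \eqref{eq:lds} and work with the deterministic recursion $\mathbf{y}_t = C\mathbf{x}_t$, $\mathbf{x}_{t+1} = A\mathbf{x}_t$, since observability is a structural property of the pair $(C,A)$. Unrolling the state recursion from any time origin $\tau$ gives $\mathbf{x}_{\tau+k} = A^k \mathbf{x}_\tau$, hence $\mathbf{y}_{\tau+k} = C A^k \mathbf{x}_\tau$ for $k = 0, 1, \dots$. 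Stacking the first $d$ of these outputs into one vector yields the block-triangular identity
\begin{equation}
\begin{pmatrix} \mathbf{y}_{\tau} \\ \mathbf{y}_{\tau+1} \\ \vdots \\ \mathbf{y}_{\tau+d-1} \end{pmatrix} = \mathcal{O}(C,A)\, \mathbf{x}_{\tau}. \nonumber
\end{equation}

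Next I would invoke the rank hypothesis. Since $\mathcal{O}(C,A) \in \mathbb{R}^{nd \times d}$ is assumed to have full (column) rank $d$, it admits a left inverse $\mathcal{O}(C,A)^{\dagger}$, so $\mathbf{x}_{\tau} = \mathcal{O}(C,A)^{\dagger}\,(\mathbf{y}_{\tau}^{\top}, \dots, \mathbf{y}_{\tau+d-1}^{\top})^{\top}$ is uniquely determined from the $d$ consecutive outputs $\mathbf{y}_{\tau}, \dots, \mathbf{y}_{\tau+d-1}$. Once $\mathbf{x}_{\tau}$ is known, every state is recovered by $\mathbf{x}_{\tau+k} = A^k \mathbf{x}_{\tau}$, in particular the current one; this is precisely the finite-time recoverability that defines observability. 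For the converse, which also shows that the horizon $d$ cannot in general be shortened, I would note that if $\mathcal{O}(C,A)$ is rank-deficient there is a nonzero $\mathbf{v} \in \ker \mathcal{O}(C,A)$, i.e. $C A^k \mathbf{v} = 0$ for $k = 0, \dots, d-1$; by the Cayley--Hamilton theorem each $A^k$ with $k \ge d$ is a polynomial of degree at most $d-1$ in $A$, so $C A^k \mathbf{v} = 0$ for \emph{all} $k \ge 0$, and therefore the initial states $\mathbf{v}$ and $0$ produce identical (zero) output sequences and are indistinguishable — the system is unobservable.

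The one point that requires real care, rather than routine linear algebra, is the Cayley--Hamilton step: it is what guarantees that appending the rows $C A^d, C A^{d+1}, \dots$ to $\mathcal{O}(C,A)$ does not enlarge its row space, so the rank (and kernel) of the extended observability matrix stabilizes at step $d$. This is exactly what pins the vague ``finite time'' in the definition down to the concrete horizon $d$ and makes the rank test both sufficient and sharp. Everything else — unrolling the recursion, stacking the outputs, and inverting on the column space — is bookkeeping. I would close with a remark that for the stochastic model \eqref{eq:lds} the same argument recovers the state up to the contribution of the noise $\{\omega_t,\nu_t\}$, which is why the time-invariant measurement operator $\bar{\Phi}\mathcal{F}C$ in our model must be designed so that the pair $(\bar{\Phi}\mathcal{F}C, A)$ satisfies this rank condition.
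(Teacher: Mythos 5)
Your proof is correct and takes essentially the same route as the paper, which offers only the one-sentence rationale that ``each of the $d$ states is viewable through linear combinations of the system output'' and otherwise defers to the cited reference: your stacked identity $(\mathbf{y}_{\tau}^{\top},\dots,\mathbf{y}_{\tau+d-1}^{\top})^{\top} = \mathcal{O}(C,A)\,\mathbf{x}_{\tau}$ followed by left inversion under the full-column-rank hypothesis is precisely that rationale made rigorous. The Cayley--Hamilton converse you append is a standard and welcome extra that pins down the horizon $d$, but it is not required by the lemma, which asserts only the ``if'' direction; the sole nit is that your stacked relation is a tall block-column equation, not ``block-triangular.''
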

The rationale for this test is that if $\mathcal{O}(C,A)$ is rank $d$, then each of the $d$ states is viewable through linear combinations of the system output $Y = (y_1,y_2,\hdots,y_t,\hdots,y_l)$.

\begin{theorem}
\label{th:cmr}
Given time-invariant compressive measurements $\bar{\Phi}$ and Fourier video data $\bar{Z} = (\bar{z}_1,\bar{z}_2,\hdots,\bar{z}_t,\hdots,\bar{z}_l)$, suppose the observability matrix $\mathcal{O}(\bar{\Phi} \mathcal{F} C, A)$ is full rank, then there exists a closed-form solution for the forward filter state sequence $\hat{X}$.
\end{theorem}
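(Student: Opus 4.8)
The plan is to reduce the time-invariant compressive subsystem to an ordinary stochastic LDS and then replay the SVD argument of Corollary~\ref{co:cmr}, but now with a higher-degree block Hankel matrix so that the observability hypothesis is genuinely used. First I would set $\bar{C} := \bar{\Phi}\mathcal{F}C \in \mathbb{R}^{m \times d}$ and rewrite \eqref{eq:ldsti} as $\bar{\mathbf{z}}_t = \bar{C}\mathbf{x}_t + \mathbf{\omega}_t$, $\mathbf{x}_{t+1} = A\mathbf{x}_t + \mathbf{\nu}_t$; since $\bar{\Phi}$ and $\mathcal{F}$ are fixed linear operators, this is again a linear time-invariant stochastic system, with observation matrix $\bar{C}$ and output dimension $m$.

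Next I would form the degree-$s$ block Hankel matrix $\mathcal{H}_s(\bar{z})$ with $s \ge d$ block rows (and $l - s + 1 \ge d$ columns), and expand each column via $\bar{\mathbf{z}}_{j+k} = \bar{C}A^k \mathbf{x}_j + (\text{terms in } \mathbf{\omega},\mathbf{\nu})$. Collecting the deterministic part yields the factorization $\mathcal{H}_s(\bar{z}) = \mathcal{O}_s(\bar{C},A)\,[\mathbf{x}_1\ \mathbf{x}_2\ \cdots\ \mathbf{x}_{l-s+1}] + E$, where $\mathcal{O}_s(\bar{C},A)$ is the extended observability matrix obtained by stacking $\bar{C}, \bar{C}A, \ldots, \bar{C}A^{s-1}$ and $E$ collects the noise contributions. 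By hypothesis $\mathcal{O}(\bar{C},A) = \mathcal{O}_d(\bar{C},A)$ has rank $d$; by Cayley--Hamilton this is also the rank of $\mathcal{O}_s(\bar{C},A)$ for every $s \ge d$, so the extended observability factor has full column rank $d$. Consequently, in the noiseless regime $\mathcal{H}_s(\bar{z})$ has rank exactly $d$, provided the realized state matrix $[\mathbf{x}_1\ \cdots\ \mathbf{x}_{l-s+1}]$ itself has full row rank $d$ — which holds generically under the excitation assumption on $\mathbf{\nu}_t$.

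I would then invoke the fixed-rank property of the SVD \cite{golub1989}, exactly as in Corollary~\ref{co:cmr}: posing $\hat{\mathcal{O}}_s, \hat{X} = \arg\min \|\mathcal{H}_s(\bar{z}) - \mathcal{O}_s X\|_F^2$ over factors in which $\mathcal{O}_s$ has orthonormal columns, and writing $\mathcal{H}_s(\bar{z}) = U\Sigma V^\top$, the leading $d$ components give the unique solution $\hat{\mathcal{O}}_s = U_d$, $\hat{X} = \Sigma_d V_d^\top$. The orthonormality constraint on the extended observability matrix is the higher-degree analogue of the constraint $C^\top C = I$ in Corollary~\ref{co:cmr} and pins down the similarity ambiguity; the resulting $\hat{X} = \Sigma_d V_d^\top$ is the claimed closed-form forward filter state sequence, and it coincides with the output of Algorithm~\ref{alg:frx}.

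The step I expect to be the main obstacle is the rank bookkeeping in the middle paragraph: making rigorous the passage from the abstract observability condition on $\mathcal{O}(\bar{\Phi}\mathcal{F}C, A)$ to the assertion that the \emph{data} Hankel matrix $\mathcal{H}_s(\bar{z})$ has rank exactly $d$. This requires both the factorization through $\mathcal{O}_s$ and a full-row-rank (reachability/excitation) guarantee for the realized state matrix, and then an argument that, in the presence of $\mathbf{\omega}_t$, the truncated SVD still recovers the correct state sequence in the least-squares sense rather than only approximately. The accompanying quantitative requirement is to choose $s$ large enough relative to $d$ while keeping $l - s + 1 \ge d$, so the window is long enough to make $\mathcal{O}_s$ full column rank yet short enough to leave sufficiently many Hankel columns.
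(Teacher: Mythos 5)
Your proposal follows essentially the same route as the paper's proof: form the block Hankel matrix of the compressive Fourier data, factor it as the (extended) observability matrix $\mathcal{O}(\bar{\Phi}\mathcal{F}C,A)$ times the state matrix $[\mathbf{x}_1\ \cdots\ \mathbf{x}_{l-d+1}]$, and read off $\hat{X} = \tilde{\Sigma}\tilde{V}^{\top}$ from the truncated SVD; the only difference is that the paper fixes the Hankel depth at exactly $d$ where you allow $s \ge d$ via Cayley--Hamilton. The ``obstacles'' you flag --- the noise term $E$, the full-row-rank (excitation) requirement on the realized states, and the exact-rank claim for the data Hankel matrix --- are genuine, but the paper's own proof silently drops the noise and omits these verifications as well, so your argument is at least as complete as the one in the paper.
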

\begin{proof}
Given the video compressive sensing model, we use the time-invariant component to estimate the state sequence.  We formulate the block Hankel matrix based on the Fourier measurements $\mathcal{H}_{d}$, and by denoting $\bar{C} = \bar{\Phi} \mathcal{F} C$,
\begin{equation}
%\begin{align}
\begin{split}
\mathcal{H}_d(\bar{z}) & =:
\begin{pmatrix}
\bar{z}_1 & \bar{z}_2 & \hdots & \bar{z}_{l-d+1} \\
\bar{z}_2 & \bar{z}_3 & \hdots & \bar{z}_{l-d+2} \\
\vdots & \vdots & \ddots & \vdots \\
\bar{z}_d & \bar{z}_{d+1} & \hdots & \bar{z}_l \\
\end{pmatrix} \nonumber \\ 
& =
\begin{pmatrix}
\bar{C} x_1 & \bar{C} x_2 & \hdots & \bar{C} x_{l-d+1} \\
\bar{C}A x_1 & \bar{C}A x_2 & \hdots & \bar{C}A x_{l-d+1} \\
\vdots & \vdots & \ddots & \vdots \\
\bar{C}A^{d-1} x_1 & \bar{C}A^{d-1} x_{2} & \hdots & \bar{C}A^{d-1} x_{l-d+1} \\
\end{pmatrix} \nonumber \\
& =
\begin{pmatrix}
\bar{C} \\
\bar{C}A \\
\vdots \\
\bar{C}A^{d-1} \\
\end{pmatrix}
\begin{pmatrix}
x_1 & x_2 & \hdots & x_{l-d+1} \\
\end{pmatrix} \nonumber \\
& =
\mathcal{O}(\bar{\Phi} \mathcal{F} C, A)
\begin{pmatrix}
x_1 & x_2 & \hdots & x_{l-d+1} \\
\end{pmatrix}.
\end{split}
%\end{align} 
\end{equation}
Under the assumption that $\mathcal{O}(\bar{\Phi}\mathcal{F}C,A)$ is full-rank, the LDS is observable, according to Lemma 3.4.  Therefore, one can obtain a canonical model realization through the SVD
\begin{equation}
\mathcal{H}_d(\bar{z}) = \mathcal{O}(\bar{\Phi}\mathcal{F}C,A)
\begin{pmatrix}
x_1 & x_2 & \hdots & x_{l-d+1}
\end{pmatrix}
= \tilde{U} \tilde{\Sigma} \tilde{V}^{\top}.
\end{equation}
This leads to the estimate for the state sequence:
\begin{equation}
\hat{X} = \tilde{\Sigma} \tilde{V}^{\top},
\end{equation}
as a modified closed-form solution.
\end{proof}

The above result leads to Algorithm~\ref{alg:frx}, where one can estimate the state sequence based on a very simple procedure using SVD.

\subsubsection{Canonical Model Realization without SVD}

Algorithm~\ref{alg:frx} exploits the full rank of the block Hankel matrix, which represents the model complexity.  It is well-known the rank of block Hankel matrix can be corrupted when there is noise in the data.  Moreover, it is often desirable in system identification to reduce the model complexity.  In the view of video compressive sensing, it is favorable to obtain a most compact representation of the video data and perform computation on the corresponding low-dimensional manifold.  We thus extend Algorithm~\ref{alg:frx} to the low rank case.

We formulate the system identification problem as follows:
\begin{equation}
\label{eq:lr}
\hat{C}, \hat{X} = \arg \min_{C,X} \| C X - M \|_{F}^2 \hspace{3mm} \text{s.t.} \hspace{3mm} M = \mathcal{H}_{1,1,l}(z) \hspace{3mm} \text{rank}(CX) = d
\end{equation}
where a low-rank factorization is sought, resulting in spatial factor $C \in \mathbb{R}^{n \times d}$ and temporal factor $X \in \mathbb{R}^{d \times l}$.  Such an approach is designed to alleviate possible corruption of noise, which can increase the rank of block Hankel matrix.  We adopt a low-rank factorization algorithm based on nonlinear successive over-relaxation (SOR) \cite{wen2010}.  This results in Algorithm~\ref{alg:lrx}, which avoids the computation burden of SVD and obtains estimation for both the observation matrix $\hat{C}$ and state sequence $\hat{X}$.

\begin{algorithm}[t]
\caption{State Sequence Estimate without SVD}
\label{alg:lrx}
\begin{algorithmic}
\STATE \textbf{Input:} video data $\bar{Z}$, system order $d$
\STATE \textbf{1}$^{\circ}$ Formulate simplest block Hankel matrix $\mathcal{H}_{1,1,l}(z) = [\bar{z}_1, \bar{z}_2, \hdots, \bar{z}_t, \hdots, \bar{z}_{l}]$
\STATE \textbf{2}$^{\circ}$ Initialize $k=0$, $C^0 \in \mathbb{R}^{n \times d}$, $X^0 \in \mathbb{R}^{d \times l}$, $\omega = 1$, $\tilde{\omega} > 1$, $\delta > 0$, $\gamma_1 \in (0,1)$
\WHILE{not converged}
\STATE \textbf{3}$^{\circ}$ Set $(C, X, M) = (C^k, X^k, M^k)$
\STATE \textbf{4}$^{\circ}$ Compute $M_{\omega} \leftarrow \omega M^k + (1-\omega) C^k X^k$
\STATE \textbf{5}$^{\circ}$ Compute $C_{+}(\omega) \leftarrow M_{\omega} X^{\top} (X X^{\top})^{\dag}$
\STATE \textbf{6}$^{\circ}$ Compute $X_{+}(\omega) \leftarrow (C_{+}(\omega)^{\top} C_{+}(\omega))^{\dag}( C_{+}(\omega)^{\top} M_{\omega})$
\STATE \textbf{7}$^{\circ}$ Compute $M_{+}(\omega) \leftarrow \mathcal{H}_{1,1,l}(z)$
\STATE \textbf{8}$^{\circ}$ Compute residual ratio $\gamma(\omega) \leftarrow {\|M-C_{+}(\omega) X_{+}(\omega)\|_F}/{\|M-CX\|_F}$
\IF{$\gamma(\omega) \ge 1$}
\STATE Set $\omega = 1$ and go to \textbf{4}$^{\circ}$
\ENDIF
\STATE \textbf{9}$^{\circ}$ Update $(C^{k+1}, X^{k+1}, M^{k+1}) = (C_{+}(\omega), X_{+}(\omega), M_{+}(\omega))$
\STATE Update $k \leftarrow k+1$
\IF{$\gamma(\omega) \ge \gamma_1$}
\STATE Set $\delta = \max(\delta, 0.25(\omega-1))$ and $\omega = \min(\omega+\delta,\tilde{\omega})$
\ENDIF
\ENDWHILE
\end{algorithmic}
\end{algorithm}

\begin{corollary}
\label{co:lr}
There exists at least a subsequence $\{(C^k,X^k,M^k)\}$ generated by Algorithm~\ref{alg:lrx} that satisfies the first-order optimality conditions of \eqref{eq:lr} in the limit.
\end{corollary}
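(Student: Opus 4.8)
The plan is to recognize Algorithm~\ref{alg:lrx} as a special case of the nonlinear successive over-relaxation (SOR) scheme of \cite{wen2010} and then to invoke the convergence analysis developed there. First I would observe that the simplest Hankel matrix $\mathcal{H}_{1,1,l}(z)=[\bar z_1,\dots,\bar z_l]$ is \emph{fully} observed, so the sampling operator in the matrix-completion model of \cite{wen2010} is the identity; the update $M_{+}(\omega)\leftarrow\mathcal{H}_{1,1,l}(z)$ then merely resets $M^{k+1}=M$, and \eqref{eq:lr} is exactly their rank-constrained least-squares model in the fully-sampled regime. In the factored parametrization $C\in\mathbb{R}^{n\times d}$, $X\in\mathbb{R}^{d\times l}$ the constraint $\mathrm{rank}(CX)\le d$ is automatic, so the first-order optimality conditions of \eqref{eq:lr} reduce to the stationarity equations $C^{\top}(CX-M)=0$ and $(CX-M)X^{\top}=0$. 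The updates $C_{+}(\omega)\leftarrow M_{\omega}X^{\top}(XX^{\top})^{\dag}$ and $X_{+}(\omega)\leftarrow(C_{+}(\omega)^{\top}C_{+}(\omega))^{\dag}(C_{+}(\omega)^{\top}M_{\omega})$ are precisely the alternating least-squares solutions of $\min_{C}\|M_{\omega}-CX\|_F^2$ and then $\min_{X}\|M_{\omega}-C_{+}X\|_F^2$ via the normal equations (with Moore--Penrose inverses to accommodate rank deficiency), and $M_{\omega}$ is the over-relaxed target.

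Next I would establish the two ingredients the argument needs, monotonicity and boundedness. The residual-ratio test, together with the rule that reverts to $\omega=1$ whenever $\gamma(\omega)\ge1$, guarantees that $r_k:=\|M-C^kX^k\|_F$ is nonincreasing; since $r_k\ge0$ it converges, and telescoping makes the per-iteration decrease $r_k^2-r_{k+1}^2$ summable while the least-squares structure of the updates bounds it below by a quantity measuring the optimality residual at iterate $k$. Because $M$ is fixed, convergence of $\{r_k\}$ forces $\{C^kX^k\}$ to be bounded, and after orthonormalizing the columns of $C^k$ following its update (which leaves the product $C^kX^k$ unchanged, as in \cite{wen2010}) one also obtains boundedness of the factors $\{(C^k,X^k)\}$ individually. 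Bolzano--Weierstrass then extracts a convergent subsequence $(C^{k_j},X^{k_j})\to(C^{\ast},X^{\ast})$.

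Finally I would pass to the limit. The normal equations behind the $C$- and $X$-updates express the quantities $(C^k)^{\top}(C^kX^k-M)$ and $(C^kX^k-M)(X^k)^{\top}$ in terms of $r_k-r_{k+1}$ and the relaxation parameter $\omega\in[1,\tilde{\omega}]$, and the convergence of $\{r_k\}$ drives these to zero along the subsequence. Continuity, away from the degenerate locus where the pseudo-inverses are discontinuous, then yields $(C^{\ast})^{\top}(C^{\ast}X^{\ast}-M)=0$ and $(C^{\ast}X^{\ast}-M)(X^{\ast})^{\top}=0$, i.e.\ $(C^{\ast},X^{\ast})$ satisfies the first-order optimality conditions of \eqref{eq:lr}. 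Since this chain of reasoning is exactly the convergence theorem proved for the general algorithm in \cite{wen2010}, the corollary follows.

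The step I expect to be the main obstacle is controlling the Moore--Penrose inverses near rank-deficient iterates and, relatedly, the boundedness of the individual factors: when $X^k(X^k)^{\top}$ or $C_{+}(\omega)^{\top}C_{+}(\omega)$ becomes singular the updates fail to be continuous in the data, so one must verify that the residual-control mechanism and the column normalization keep the iterates in a compact set and that the limiting stationarity equations survive the passage through such points. This is precisely the technical heart of the analysis in \cite{wen2010}, which is why the result is stated only for a subsequence.
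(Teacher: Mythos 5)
Your proposal is correct and follows essentially the same route as the paper, which simply observes that Algorithm~\ref{alg:lrx} is the nonlinear SOR scheme of \cite{wen2010} (here in the fully-observed case, so the $M$-update is trivial) and invokes Theorem 3.5 there. The additional detail you supply --- residual monotonicity, boundedness after column normalization, and passage to the limit in the normal equations --- is a faithful sketch of the internals of that cited theorem rather than a different argument.
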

\begin{proof}
We omit the proof here, since it is an immediate result of Theorem 3.5 in \cite{wen2010}.
\end{proof}

The estimated state sequence by factorizing the Hankel matrix is only accurate up to a linear transformation.  In other words, there is no unique solution for the state sequence estimation.  Such an ambiguity poses difficulty for reconstruction of the observation matrix.  This difficulty is resolved by a joint sparsity assumption, as we will describe in the next session.

\subsection{Observation Matrix Reconstruction}
\label{obsrecon}

Once we have estimated the state sequence $\hat{\mathbf{x}}_t$, the next component of the algorithm is to reconstruct the observation matrix $C$.

\subsubsection{Joint Structured Sparsity}

Denoting observation matrix as $C \in \mathbb{R}^{n \times d}$, we formulate the reconstruction model as follows:
%Let $\mathbf{c}_1, \mathbf{c}_2, \cdots, \mathbf{c}_d$ be the columns of $C$, and $\mathbf{c}_{(1)}, \mathbf{c}_{(2)}, \cdots, \mathbf{c}_{(n)}$ be the rows of $C$. We formulate the reconstruction model as follows:
\begin{equation}
\label{eq:jss}
\min_{C} \hspace{2mm} \alpha \sum_{i=1}^{n} \| \mathbf{e}_{(i)} \Psi(C) \|_2 + \beta \sum_{j=1}^{d} \| \Psi(C) \mathbf{e}_j \|_1 
+ \sum_{t=1}^{l} \frac{1}{2} \| \mathbf{z}_t - \Phi_t \mathcal{F} (C \hat{\mathbf{x}}_t) \|_2^2,
\end{equation}
where $\Psi$ denotes the frame-by-frame wavelet transform operator, defined in \eqref{eq:waveletoptvideo}.  The first two regularization terms concern structured sparsity for the observation matrix.  The first term is the joint sparsity regularization, $\sum_{i=1}^{n} \| \mathbf{e}_{(i)} \Psi(C) \|_2$, which encourages all the columns $\Psi(C)$ to share a common yet small support.  The second term, $\sum_{j=1}^{d} \| \Psi(C) \mathbf{e}_j \|_1$, demands $ \Psi(C) \mathbf{e}_j$ to be sparse under wavelet transform, based on the assumption that each frame of the observation matrix is image-like.  The joint sparsity is critical to the success of reconstruction, due to the ambiguity introduced by the non-uniqueness within the state sequence estimation.  

Computationally one immediately notices the first two terms are non-smooth and both involve $C$.  Moreover, the first term operates on the row space of matrix $C$ while the second term operates on the column space of matrix $C$.  In addition, the amount of data and the number of variables are large in our application. For these reasons, it is difficult to solve the optimization problem by off-the-shell algorithms for $\ell_1$ minimization.  We propose to apply the alternating direction method of multipliers (ADMM) \cite{gabay1876} in such a way that all subproblems are easy to solve and can handle a large amount of data in a short time.

\subsubsection{Alternating Direction Method of Multipliers}

ADMM combines variable splitting techniques with the augmented Lagrangian method for solving constrained optimization with separable objective functions. It is also referred to as the alternating direction augmented Lagrangian method by several groups in the community.\  Alternating direction methods originated from solving PDEs \cite{douglas1956, peaceman1955} and were later extended to solving variational problems associated with PDEs \cite{gabay1876}. Recently, there has been a surge of interest in fast optimization algorithms using ADMM methodology for solving $\ell_1$ and TV regularized problems \cite{goldstein2008, afonso2010, goldfarb2010, yang2011}. A state-of-art algorithm solved for group sparsity problems, which include joint sparsity \cite{deng2011}. Our new algorithm is based on ADMM and is optimized for the best computational efficiency.

We first introduce additional variables to split the energy between different regularization terms, which results in the following constrained optimization,
\begin{equation}
\label{prob17}
\begin{split}
\min_{U, V, C}  & \hspace{2mm}  \alpha \sum_{i=1}^{n} \| \mathbf{u}_{(i)} \|_2 + \beta \sum_{j=1}^{d} \| \mathbf{v}_{j} \|_1
+ \sum_{t=1}^{l} \frac{1}{2} \| \mathbf{z}_t - \Phi_t \mathcal{F} (C \hat{\mathbf{x}}_t) \|_2^2 \\
\text{s.t.\hspace{2mm}}  &  \hspace{2mm} \mathbf{u}_{(i)} = \mathbf{e}_{(i)} \Psi(C) \hspace{6mm} \mathbf{v}_{j} = \Psi(C) \mathbf{e}_{j},
\end{split}
\end{equation}
where we have introduced variables $U$ and $V$.  The rows of $U \in \mathbb{R}^{n \times d}$ are $\mathbf{u}_{(i)}$, $i = 1,2,\cdots,n$, and the columns of $V \in \mathbb{R}^{n \times d}$ are $\mathbf{v}_{j}$, $j = 1,2,\cdots,d$.

To apply ADMM, we introduce the augmented Lagrangian of problem \eqref{prob17},
\begin{equation}
\label{prob18} 
\begin{split}
\min_{U, V, C} \hspace{2mm} & \alpha \sum_{i=1}^{n} \| \mathbf{u}_{(i)} \|_2 + \beta \sum_{j=1}^{d} \| \mathbf{v}_{j} \|_1 
+ \sum_{t=1}^{l} \frac{1}{2} \| \mathbf{z}_t - \Phi_t \mathcal{F} (C \hat{\mathbf{x}}_t) \|_2^2 \\
& + \frac{\alpha \mu}{2} \sum_{i=1}^{n} \| \mathbf{u}_{(i)} - \mathbf{e}_{(i)} \Psi(C) - \mathbf{\kappa}_{(i)} \|_2^2
+ \frac{\beta \mu}{2} \sum_{j=1}^{d} \| \mathbf{v}_{j} - \Psi(C) \mathbf{e}_{j} - \mathbf{\lambda}_{j} \|_2^2,
\end{split}
\end{equation}
where we use $\mathcal{L}(U,V,C)$ to denote the augmented Lagrangian, and use $\mathbf{\kappa}_{(i)}$ and $\mathbf{\lambda}_{j}$ to denote the scaled Lagrange multipliers.  When $C$ is fixed, minimizing over $U$ and $V$ are independent. Therefore, we apply ADMM to \eqref{prob18} in which we alternate between minimizing its objective function over $U,V$ with fixed $C$ and minimizing it over $C$ with fixed $U,V$, along with the updates to $\mathbf{\lambda}_{j}$, $\mathbf{\kappa}_{(i)}$. We summarize the algorithm flow in Algorithm \ref{alg3} and explain in the next few subsections how to efficiently solve each of its subproblems.

\begin{algorithm}[t]
\caption{Joint Structured Sparsity Optimization}
\label{alg3}
\begin{algorithmic}
\STATE Initialize $U \in \mathbb{R}^{n \times d}$, $V \in \mathbb{R}^{n \times d}$, $C \in \mathbb{R}^{n \times d}$, $k = 0$.
\WHILE {convergence criteria not met}

\STATE \textbf{$U$-subproblem:} \\
\STATE  $ U^{k+1} = \arg \underset{U} {\text{min}} \hspace{2mm} \sum_{j=1}^{n} \alpha \| \mathbf{u}_{(i)} \|_2 + \frac{\alpha \mu}{2} \| \mathbf{u}_{(i)} - \mathbf{e}_{(i)} \Psi(C^k) - \mathbf{\kappa}_{(i)}^k \|_2^2$

\STATE \textbf{$V$-subproblem:} \\
\STATE $ V^{k+1} = \arg \underset{V} {\text{min}} \hspace{2mm} \sum_{j=1}^{d} \beta \| \mathbf{v}_{j} \|_1 + \frac{\beta \mu}{2} \| \mathbf{v}_{j} - \Psi(C^k) \mathbf{e}_{j} - \mathbf{\lambda}_{j}^k \|_2^2$

\STATE \textbf{$C$-subproblem:} \\
\STATE $ C^{k+1} = \arg \underset{C} {\text{min}} \hspace{2mm}
\frac{\alpha \mu}{2} \sum_{i=1}^{n} \| \mathbf{u}_{(i)}^{k+1} - \mathbf{e}_{(i)} \Psi(C) - \mathbf{\kappa}_{(i)}^k \|_2^2
+ \frac{\beta \mu}{2} \sum_{j=1}^{d} \| \mathbf{v}_{j}^{k+1} - \Psi(C) \mathbf{e}_{j} - \mathbf{\lambda}_{j}^k \|_2^2 
+ \sum_{t=1}^{l} \frac{1}{2} \| \mathbf{z}_t - \Phi_t \mathcal{F} (C \hat{\mathbf{x}}_t) \|_2^2$

\STATE \textbf{Multipliers update:} \\
\STATE $ \mathbf{\kappa}_{(i)}^{k+1} = \mathbf{\kappa}_{(i)}^{k} - \gamma (\mathbf{u}_{(i)}^{k+1} - \Psi \mathbf{c}_{(i)}^{k+1} )$
\STATE $ \mathbf{\lambda}_{j}^{k+1} = \mathbf{\lambda}_{j}^{k} - \gamma  (\mathbf{v}_{j}^{k+1} - \Psi \mathbf{c}_{j}^{k+1} )$ 

\STATE $k = k+1$
\ENDWHILE
\end{algorithmic}
\end{algorithm}

%\item{Joint Sparsity}
\subsubsection{Joint Sparsity}

The $U$-subproblem models the joint sparsity between the different spatial factors within the observation matrix.  Noticing the optimization is independent with respect to each row $\mathbf{u}_{(i)}$ of $U$, we therefore solve for  $i = 1, 2, \hdots, n$,
\begin{equation}
\label{eq:uprob}
\mathbf{u}_{(i)}^{k+1} = \arg \min_{\mathbf{u}_{(i)}} \hspace{1mm} \alpha \| \mathbf{u}_{(i)} \|_2 + \frac{\alpha \mu}{2} \| \mathbf{u}_{(i)} - \mathbf{e}_{(i)} \Psi(C^k) - \mathbf{\kappa}_{(i)}^k \|_2^2.
\end{equation}

\begin{lemma}[Shrinkage for $\ell_2$ norm, \cite{deng2011}]
For any $\lambda, \mu >0$ and $x, y \in \mathbb{R}^n$, the minimizer to
\begin{equation*}
\min_{y} \lambda \|y\|_2 + \frac{\mu}{2} \|y-x\|_2^2
\end{equation*}
is given by
\begin{equation}
y = \mathcal{S}_2 \big(x,\frac{\lambda}{\mu} \big) := \max \big\{\|x\|_2-\frac{\lambda}{\mu},0 \big\} \odot \frac{x}{\|x\|_2},
\end{equation}
where $\odot$ denotes  component-wise product and $\mathcal{S}_2$ stands for $\ell_2$-shrinkage.
\end{lemma}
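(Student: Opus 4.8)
The plan is to exploit strict convexity together with the rotational symmetry of the problem to reduce this $n$-dimensional minimization to a one-dimensional one, and then solve that by elementary calculus. First I would note that the objective $f(y) := \lambda\|y\|_2 + \tfrac{\mu}{2}\|y-x\|_2^2$ is the sum of a convex function and a strictly convex function, hence strictly convex, and it is coercive; therefore it has a unique minimizer $y^\star$. The case $x = 0$ is immediate: $f(y) = \lambda\|y\|_2 + \tfrac{\mu}{2}\|y\|_2^2$ is minimized at $y^\star = 0$, which agrees with the claimed formula (the $\max$ returning $0$).

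For $x \neq 0$ I would show that $y^\star$ must be a nonnegative multiple of $x$. Write any $y$ as $y = s\,\tfrac{x}{\|x\|_2} + y_\perp$ with $y_\perp$ orthogonal to $x$. Then $\|y\|_2^2 = s^2 + \|y_\perp\|_2^2$ and $\|y-x\|_2^2 = (s-\|x\|_2)^2 + \|y_\perp\|_2^2$, so both terms of $f$ are nondecreasing in $\|y_\perp\|_2$; hence the minimizer has $y_\perp = 0$. Moreover replacing $s$ by $|s|$ leaves $\|y\|_2$ unchanged and does not increase $(s-\|x\|_2)^2$ (since $\|x\|_2 \ge 0$), so $s \ge 0$. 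Thus $y^\star = t^\star\,\tfrac{x}{\|x\|_2}$, where $t^\star = \arg\min_{t \ge 0} g(t)$ and $g(t) := \lambda t + \tfrac{\mu}{2}(t-\|x\|_2)^2$.

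Finally I would minimize the smooth convex parabola $g$ on $[0,\infty)$: the unconstrained stationary point is $t = \|x\|_2 - \lambda/\mu$, so the constrained minimizer is $t^\star = \max\{\|x\|_2 - \lambda/\mu,\,0\}$, and substituting back gives $y^\star = \max\{\|x\|_2 - \lambda/\mu,\,0\}\,\tfrac{x}{\|x\|_2}$, as claimed. An equivalent route is to work directly from the optimality condition $0 \in \lambda\,\partial\|y\|_2 + \mu(y-x)$: for $y = 0$ one has $\partial\|y\|_2 = \{g : \|g\|_2 \le 1\}$ and the condition reads $\mu\|x\|_2 \le \lambda$, while for $y \neq 0$ one has $\partial\|y\|_2 = \{y/\|y\|_2\}$, which forces $y$ to be a positive scalar multiple of $x$ and then pins down the scalar. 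No step here is a genuine obstacle; the only mild care needed is the reduction to the ray through $x$ and the clean treatment of the boundary case $t^\star = 0$ (equivalently $\|x\|_2 \le \lambda/\mu$), where the minimizer collapses to the origin and the formula is interpreted with the convention that the $\max$ term vanishes.
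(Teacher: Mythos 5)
Your proof is correct and complete. Note that the paper itself gives no proof of this lemma --- it is stated as a citation to the group-sparsity reference \cite{deng2011} --- so your argument supplies a derivation the paper omits. The route you take (strict convexity plus coercivity for existence and uniqueness, orthogonal decomposition $y = s\,x/\|x\|_2 + y_\perp$ to reduce to a one-dimensional parabola on $[0,\infty)$, and separate handling of the $x=0$ / $\|x\|_2 \le \lambda/\mu$ boundary case where the minimizer collapses to the origin) is the standard and cleanest elementary derivation of the $\ell_2$-shrinkage (block soft-thresholding) operator; the subgradient condition $0 \in \lambda\,\partial\|y\|_2 + \mu(y-x)$ that you sketch as an alternative is what the cited reference uses and gives the same answer. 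One cosmetic remark: the lemma's use of $\odot$ is really just scalar--vector multiplication, as your reduction to the ray through $x$ makes explicit.
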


One can derive closed-form solution to the U-subproblem \eqref{eq:uprob},
\begin{equation}
\mathbf{u}^{k+1}_{(i)} = \mathcal{S}_2 \big( \mathbf{e}_{(i)} \Psi(C^k) + \mathbf{\kappa}^k_{(i)}, \frac{1}{\mu} \big),\quad i = 1, 2, \hdots, n.
\end{equation}

%\begin{itemize}
%\item{Wavelet Sparsity}
\subsubsection{Wavelet Sparsity}

The $V$-subproblem concerns the wavelet sparsity, and is reduced to a sequence of the same $\ell_1$ minimization problems with different data.  Since computation on each column of $V$ matrix $\mathbf{v}_{j}$ is completely decoupled, we solve for each column independently.  For $j = 1, 2, \hdots, d$,
\begin{equation}
\label{eq:vprob}
\mathbf{v}_{j}^{k+1} = \arg \min_{ \mathbf{v}_{j} } \hspace{1mm} \beta \| \mathbf{v}_{j} \|_1 + \frac{\beta \mu}{2} \| \mathbf{v}_{j} - \Psi(C^k) \mathbf{e}_{j} - \mathbf{\lambda}_{j}^k \|_2^2.
\end{equation}

%\begin{lemma}[Shrinkage for $\ell_1$ norm]
For any $\lambda, \mu >0$ and $x, y \in \mathbb{R}^n$, the minimizer to
\begin{equation*}
\min_{y} \lambda \|y\|_1 + \frac{\mu}{2} \|y-x\|_2^2
\end{equation*}
is given by
\begin{equation}
y = \mathcal{S}_1 \big(x,\frac{\lambda}{\mu}\big) := \max \big\{|x|-\frac{\lambda}{\mu},0\big\} \odot \mathrm{sgn}(x).
\end{equation}
where $\odot$ denotes  component-wise product and $\mathcal{S}_1$ stands for $\ell_1$-shrinkage.
%\end{lemma}

The closed-form solution to the $V$-subproblem \eqref{eq:vprob} is
\begin{equation}
\mathbf{v}^{k+1}_{j} = \mathcal{S}_1 \big( \Psi(C^k) \mathbf{e}_{j} + \mathbf{\lambda}^k_{j}, \frac{1}{\mu} \big),\quad j = 1, 2, \hdots, d.
\end{equation}

%\item{Reconstruction Fidelity}
\subsubsection{Reconstruction Fidelity}

The $C$-subproblem, as it involves multiple terms and all the input data, is the most time consuming to solve. Specifically, it is
\begin{equation}
\begin{split}
{C}^{k+1} = \arg \underset{C} {\text{min}} \hspace{2mm} &
  \frac{\alpha \mu}{2} \sum_{i=1}^{n} \| \mathbf{u}_{(i)}^{k+1} - \mathbf{e}_{(i)} \Psi(C) - \mathbf{\kappa}_{(i)}^k \|_2^2
+ \frac{\beta \mu}{2} \sum_{j=1}^{d} \| \mathbf{v}_{j}^{k+1} - \Psi(C) \mathbf{e}_{j} - \mathbf{\lambda}_{j}^k \|_2^2 \nonumber \\
& + \sum_{t=1}^{l} \frac{1}{2} \| \mathbf{z}_t - \Phi_t \mathcal{F} (C \hat{\mathbf{x}}_t) \|_2^2.
\end{split}
\end{equation}
Once we further write out the fidelity term, and utilizing the linearity property of the discrete Fourier transform, we have
\begin{equation}
\begin{split}
H(C) & :=  \sum_{t=1}^{l} \frac{1}{2} \| \mathbf{z}_t - \Phi_t \mathcal{F} (C \hat{\mathbf{x}}_t) \|_2^2 \nonumber \\
        & =  \sum_{t=1}^{l} \frac{1}{2} \| \mathbf{z}_t - \Phi_t \mathcal{F} \sum_{j=1}^{d} \mathbf{c}_{j} \hat{x}_{t,j} \|_2^2 \nonumber \\
        & =  \sum_{t=1}^{l} \frac{1}{2} \| \mathbf{z}_t - \sum_{j=1}^{d} \hat{x}_{t,j} \Phi_t \mathcal{F} \mathbf{c}_{j} \|_2^2.
\end{split}
\end{equation}
This leads to the following equivalent problem:
\begin{equation}
\begin{split}
{C}^{k+1} = \arg \underset{C} {\text{min}} \hspace{2mm} &
   \frac{\alpha \mu}{2} \sum_{i=1}^{n} \| \mathbf{u}_{(i)}^{k+1} - \mathbf{e}_{(i)} \Psi(C) - \mathbf{\kappa}_{(i)}^k \|_2^2
+  \frac{\beta \mu}{2} \sum_{j=1}^{d} \| \mathbf{v}_{j}^{k+1} - \Psi(C) \mathbf{e}_{j} - \mathbf{\lambda}_{j}^k \|_2^2 \\
& + \sum_{t=1}^{l} \frac{1}{2} \| \mathbf{z}_t - \sum_{j=1}^{d} \hat{x}_{t,j} \Phi_t \mathcal{F} \mathbf{c}_{j} \|_2^2.
\end{split}
\end{equation}
In the $C$-subproblem, the first term operates in the row space while the second and third terms operate in the column space of the observation matrix $C$; this is undesirable computationally.  However, it is easy to see
\begin{equation}
\sum_{i=1}^{n} \| \mathbf{u}_{(i)}^{k+1} - \mathbf{e}_{(i)} \Psi(C) - \mathbf{\kappa}_{(i)}^k \|_2^2 
= \| U^{k+1} - \Psi(C) - \Upsilon^{k} \|_F^2
= \sum_{j=1}^{d} \| \mathbf{u}_{j}^{k+1} - \Psi(C) \mathbf{e}_{j} - \mathbf{\kappa}_{j}^k \|_2^2, \nonumber
\end{equation}
which allows us to rewrite the $C$-subproblem as follows:
\begin{equation}
\label{eq:cprob}
\begin{split}
C^{k+1} = \arg \underset{C} {\text{min}} \hspace{2mm} &
\frac{\alpha \mu}{2} \sum_{j=1}^{d} \| \mathbf{u}_{j}^{k+1} - \Psi \mathbf{c}_{j} - \mathbf{\kappa}_{j}^k \|_2^2
+ \frac{\beta \mu}{2} \sum_{j=1}^{d} \| \mathbf{v}_{j}^{k+1} - \Psi \mathbf{c}_{j} - \mathbf{\lambda}_{j}^k \|_2^2 \\
& + \sum_{t=1}^{l} \frac{1}{2} \| \mathbf{z}_t - \sum_{j=1}^{d} \hat{x}_{t,j} \Phi_t \mathcal{F} \mathbf{c}_{j} \|_2^2.
\end{split}
\end{equation}
We rewrite the above objective function as
\begin{equation}
\begin{split}
& \frac{\alpha \mu}{2} \sum_{j=1}^{d} (\Psi \mathbf{c}_j)^{\top} (\Psi \mathbf{c}_j) - 2 (\Psi \mathbf{c}_j)^{\top} (\mathbf{u}^{k+1}_j-\mathbf{\kappa}^{k}_j) + (\mathbf{u}^{k+1}_j-\mathbf{\kappa}^{k}_j)^{\top}(\mathbf{u}^{k+1}_j-\mathbf{\kappa}^{k}_j) \\
+ & \frac{\beta \mu}{2} \sum_{j=1}^{d} (\Psi \mathbf{c}_{j})^{\top}(\Psi \mathbf{c}_{j}) - 2 (\Psi \mathbf{c}_{j})^{\top} (\mathbf{v}^{k+1}_j-\mathbf{\lambda}^{k}_j) + (\mathbf{v}^{k+1}_j-\mathbf{\lambda}^{k}_j)^{\top}(\mathbf{v}^{k+1}_j-\mathbf{\lambda}^{k}_j) \\
+ & \frac{1}{2} \sum_{t=1}^{l} \sum_{j=1}^{d} (\hat{x}_{t,j} \Phi_t \mathcal{F} \mathbf{c}_j)^{\top}(\hat{x}_{t,j} \Phi_t \mathcal{F} \mathbf{c}_j) 
   + \frac{1}{2} \sum_{t=1}^{l} \sum_{j \neq j'} (\hat{x}_{t,j} \Phi_t \mathcal{F} \mathbf{c}_j)^{\top}(\hat{x}_{t,j'} \Phi_t \mathcal{F} \mathbf{c}_{j'}) \\
- & \sum_{t=1}^{l} \sum_{i=1}^{d} (\hat{x}_{t,j} \Phi_t \mathcal{F} \mathbf{c}_j)^{\top} \mathbf{z}_t + \frac{1}{2} \sum_{t=1}^{l} \mathbf{z}_t^{\top} \mathbf{z}_t.   
\end{split}
\end{equation}
By taking the first derivative of the objective function and setting it to zero, we derive the normal equation to problem \eqref{eq:cprob}.  Note we use $\Psi$ to denote the wavelet operator, and $\Psi^{\dagger}$ to denote its adjoint operator.  We use $\Phi_t$ to denote the row selector operator, and $\Phi_t^{\dagger}$ to denote its adjoint operator. Similarly, we use $\mathcal{F}$ to denote the Fourier operator, and $\mathcal{F}^{\dagger}$ to denote its adjoint operator.  The normal equation is as follows:
\begin{equation}
\label{eq:normal}
\begin{split}
& \frac{\alpha \mu}{2}
\begin{pmatrix}
{\Psi}^{\dagger} {\Psi} & 0 & \cdots & 0 \\
0 & {\Psi}^{\dagger} {\Psi} & \cdots & 0 \\
\vdots & \vdots & \ddots & \vdots \\
0 & 0 & \cdots & {\Psi}^{\dagger} {\Psi}
\end{pmatrix} 
\begin{pmatrix}
\mathbf{c}_1 \\
\mathbf{c}_2 \\
\vdots \\
\mathbf{c}_d
\end{pmatrix}
+ \frac{\beta \mu}{2}
\begin{pmatrix}
{\Psi}^{\dagger} {\Psi} & 0 & \cdots & 0 \\
0 & {\Psi}^{\dagger} {\Psi} & \cdots & 0 \\
\vdots & \vdots & \ddots & \vdots \\
0 & 0 & \cdots & {\Psi}^{\dagger} {\Psi}
\end{pmatrix}
\begin{pmatrix}
\mathbf{c}_1 \\
\mathbf{c}_2 \\
\vdots \\
\mathbf{c}_d
\end{pmatrix}
\\
+ & \frac{1}{2}
\begin{pmatrix}
\sum_{t} \hat{x}_{t,1}^2 (\Phi_t \mathcal{F})^{\dagger}\Phi_t \mathcal{F} & \sum_{t} \hat{x}_{t,1} \hat{x}_{t,2} (\Phi_t \mathcal{F})^{\dagger} \Phi_t \mathcal{F} & \cdots & \sum_{t} \hat{x}_{t,1} \hat{x}_{t,d} (\Phi_t \mathcal{F})^{\dagger} \Phi_t \mathcal{F} \\
\sum_{t} \hat{x}_{t,2} \hat{x}_{t,1} (\Phi_t \mathcal{F})^{\dagger} \Phi_t \mathcal{F} & \sum_{t} \hat{x}_{t,2}^2 (\Phi_t \mathcal{F})^{\dagger} \Phi_t \mathcal{F} & \cdots & \sum_{t} \hat{x}_{t,2} \hat{x}_{t,d} (\Phi_t \mathcal{F})^{\dagger} \Phi_t \mathcal{F} \\
\vdots & \vdots & \ddots & \vdots \\
\sum_{t} \hat{x}_{t,d} \hat{x}_{t,1} (\Phi_t \mathcal{F})^{\dagger} \Phi_t \mathcal{F} & \sum_{t} \hat{x}_{t,d} \hat{x}_{t,2} (\Phi_t \mathcal{F})^{\dagger} \Phi_t \mathcal{F} & \cdots & \sum_{t} \hat{x}_{t,d}^2 (\Phi_t \mathcal{F})^{\dagger} \Phi_t \mathcal{F} 
\end{pmatrix} 
\begin{pmatrix}
\mathbf{c}_1 \\
\mathbf{c}_2 \\
\vdots \\
\mathbf{c}_d
\end{pmatrix}
\\
= & 
\alpha \mu
\begin{pmatrix}
\Psi^{\dagger} (\mathbf{u}^{k+1}_1 - \mathbf{\kappa}^{k}_1) \\
\Psi^{\dagger} (\mathbf{u}^{k+1}_2 - \mathbf{\kappa}^{k}_2) \\
\vdots \\
\Psi^{\dagger} (\mathbf{u}^{k+1}_d - \mathbf{\kappa}^{k}_d)
\end{pmatrix}
+
\beta \mu
\begin{pmatrix}
\Psi^{\dagger} (\mathbf{v}^{k+1}_1 - \mathbf{\lambda}^{k}_1) \\
\Psi^{\dagger} (\mathbf{v}^{k+1}_2 - \mathbf{\lambda}^{k}_2) \\
\vdots \\
\Psi^{\dagger} (\mathbf{v}^{k+1}_d - \mathbf{\lambda}^{k}_d)
\end{pmatrix}
+ 
\begin{pmatrix}
\sum_{t} \hat{x}_{t,1} (\Phi_t \mathcal{F})^{\dagger} \mathbf{z}_t \\
\sum_{t} \hat{x}_{t,2} (\Phi_t \mathcal{F})^{\dagger} \mathbf{z}_t \\
\vdots \\
\sum_{t} \hat{x}_{t,d} (\Phi_t \mathcal{F})^{\dagger} \mathbf{z}_t \\
\end{pmatrix}.
\end{split}
\end{equation}
We simplify the notation of the normal equation \eqref{eq:normal} as
\begin{equation*}
\mathcal{LHS}  
\begin{pmatrix}
\mathbf{c}_1 \\
\mathbf{c}_2 \\
\vdots \\
\mathbf{c}_d
\end{pmatrix} 
= \mathcal{RHS},
\end{equation*}
and one can immediately notice that the normal equation \eqref{eq:normal} is not a diagonal system.  In other words, $\mathbf{c}_j$'s are coupled.  Solving for linear system \eqref{eq:normal} directly can be computationally expensive.

We use the prox-linear method \cite{chen1994} to decouple the system.  In stead of solving for Eqn.~\eqref{eq:cprob} directly, we solve for the following problem using the prox-linear method that decouples all the $\mathbf{c}_j$'s:
\begin{equation}
\mathbf{c}_j^{k+1} = \arg \min_{\mathbf{c}_j} q(\mathbf{c}_j^k)^{\top} (\mathbf{c}_j-\mathbf{c}_j^k) + \frac{1}{2\delta} \| \mathbf{c}_j-\mathbf{c}_j^k \|_2^2,
\end{equation}
where $q(\mathbf{c}_j^k) = \nabla_C \mathcal{L}(U,V,C)$.  This allows us to solve for each $\mathbf{c}_j$ using block coordinate descent in the Jacobian fashion,
\begin{equation}
\label{eq:c-update}
\mathbf{c}_j^{k+1} = \mathbf{c}_j^{k} - \delta q(\mathbf{c}_j^k).
\end{equation}
More careful inspection on $q(\mathbf{c}_j^k)$ reveals
\begin{equation}
\begin{split}
q(\mathbf{c}_j^k) =
& \hspace{1mm} 2 \alpha \mu {\Psi}^{\dagger} \big({\Psi} \mathbf{c}_j^k - (\mathbf{u}_j^{k+1}-\kappa_j^k)\big) + 2 \beta \mu {\Psi}^{\dagger} \big({\Psi} \mathbf{c}_j^k -(\mathbf{v}_j^{k+1}-\lambda_j^k) \big) \\
& + \sum_t \hat{x}_{t,j}^2 ({\Phi}_t \mathcal{F})^{\dagger} {\Phi}_t \mathcal{F} \mathbf{c}_j^k - \sum_{t} \hat{x}_{t,j} (\Phi_t \mathcal{F})^{\dagger} \mathbf{z}_t.
\end{split}
\end{equation}

\subsubsection{Theoretical Convergence}

We now establish the convergence of Algorithm 2.  We first rewrite the objective function of joint structured sparsity, in the unconstrained optimization form
\begin{equation*}
\min_{C} \hspace{2mm} \alpha \sum_{i=1}^{n} \| \mathbf{e}_{(i)} \Psi(C) \|_2 + \beta \sum_{j=1}^{d} \| \Psi(C) \mathbf{e}_j \|_1
+ \sum_{t=1}^{l} \frac{1}{2} \| \mathbf{z}_t - \Phi_t \mathcal{F} (C \hat{\mathbf{x}}_t) \|_2^2,
\end{equation*}
as the constrained optimization
\begin{subequations}
\label{eq:copt}
\begin{align}
\min_{B,C} \hspace{2mm} & \alpha \sum_{i=1}^{n} \| \mathbf{e}_{(i)} \Psi(B) \|_2 + \beta \sum_{j=1}^{d} \| \Psi(B) \mathbf{e}_j \|_1
+ \sum_{t=1}^{l} \frac{1}{2} \| \mathbf{z}_t - \Phi_t \mathcal{F} (C \hat{\mathbf{x}}_t) \|_2^2 \nonumber \\
\text{s.t.\hspace{2mm}} & B - C = 0. \nonumber
\end{align}
\end{subequations}
We can further group the first two terms in the above constrained optimization together,
\begin{subequations}
\begin{align}
\min_{B,C} \hspace{2mm} & f(B) + g(C) \nonumber \\
\text{s.t.\hspace{2mm}} & B - C = \mathbf{0}, \nonumber
\end{align}
\end{subequations}
with $f(B) = \alpha \sum_{i=1}^{n} \| \mathbf{e}_{(i)} \Psi(B) \|_2 + \beta \sum_{j=1}^{d} \| \Psi(B) \mathbf{e}_j \|_1$ and $g(C) = \sum_{t=1}^{l} \frac{1}{2} \| \mathbf{z}_t - \Phi_t \mathcal{F} (C \hat{\mathbf{x}}_t) \|_2^2$.

Consider the augmented Lagrangian function,
\begin{equation}
\mathcal{L}(B,C,\lambda) = f(B) + g(C) + \frac{\mu}{2} \| B-C-\Lambda \|^2_F,
\end{equation}
where $\Lambda \in \mathbb{R}^{n \times d}$ is the scaled Lagrangian multiplier and $\mu > 0$ is a penalty parameter.  With the above reformulation, we can consolidate Algorithm 2 into a simplified version, see Algorithm 4.  Global and linear convergence for generalized ADMM was analyzed in \cite{deng2012} for constrained convex optimization problems.  We extend those theoretical results from vector case to matrix case below.

\begin{algorithm}[t]
\caption{ADMM for Constrained Optimization}
\label{alg4}
\begin{algorithmic}
\STATE Initialize $B \in \mathbb{R}^{n \times d}$, $C \in \mathbb{R}^{n \times d}$, $k = 0$.
\WHILE {convergence criteria not met}

\STATE \textbf{B-subproblem:} \\
\STATE $B^{k+1} \leftarrow \underset{B} {\text{min}} \hspace{2mm} \mathcal{L}(B, C^k, \lambda^k)$

\STATE \textbf{C-subproblem:} \\
\STATE $C^{k+1} \leftarrow \underset{C} {\text{min}} \hspace{2mm} \mathcal{L}(B^{k+1}, C, \lambda^k)$

\STATE \textbf{Multiplier update:} \\
\STATE $\Lambda^{k+1} \leftarrow \underset{\Lambda} {\text{min}} \hspace{2mm} \Lambda^{k} - \gamma (B^{k+1}-C^{k+1})$

\STATE $k = k+1$
\ENDWHILE
\end{algorithmic}
\end{algorithm}

\begin{theorem}[Global Convergence]
The sequence $\{W^k\} := \{B^k, C^k, \Lambda^k\}$ generated by Algorithm~\ref{alg4} is guaranteed to be bounded.  Moreover, if we assume there exists a saddle point $W^* := (B^*, C^*, \Lambda^*)$ to problem \eqref{eq:copt}, and Hessian $H_g = \nabla^2 g$ satisfies the following condition,
\begin{equation}
\frac{\max(\alpha \mu, \beta \mu)}{\frac{1}{\mu} - \| H_g \|} + \gamma < 2,
\end{equation}
then the sequence $\{W^{k}\}$ generated by Algorithm 3 converges to a KKT point of $W^*$,
\begin{equation*}
\lim_{k \rightarrow \infty} \| W^k - W^* \|_{F}^2 = 0.
\end{equation*}
\end{theorem}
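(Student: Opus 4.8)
The plan is to reduce the matrix problem \eqref{eq:copt} to the vector setting of \cite{deng2012} and then push the constants through carefully, so that the stated inequality on $H_g$, $\mu$ and $\gamma$ emerges as the precise threshold for a Lyapunov descent estimate. First I would vectorize: with $b=\mathrm{vec}(B)$, $c=\mathrm{vec}(C)$, $\lambda=\mathrm{vec}(\Lambda)$, Frobenius geometry on $\mathbb{R}^{n\times d}$ becomes Euclidean geometry on $\mathbb{R}^{nd}$, the constraint $B-C=0$ becomes $\mathcal{I}b-\mathcal{I}c=0$ with $\mathcal{I}$ the (full column rank) identity, $f$ becomes a proper closed convex function — a finite sum of $\ell_2$ and $\ell_1$ norms precomposed with the orthonormal operator $\Psi$, hence finite-valued and nonsmooth — and $g$ becomes the convex quadratic $g(c)=\tfrac12 c^\top H_g c - r^\top c + \mathrm{const}$, where $H_g$ is exactly the positive semidefinite block operator on the left-hand side of the normal equation \eqref{eq:normal}; in particular $\nabla g$ is $\|H_g\|$-Lipschitz and $\nabla^2 g\equiv H_g$. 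This is precisely generalized ADMM with relaxation parameter $\gamma$, so the structural hypotheses of \cite{deng2012} (closed proper convex $f$, convex $g$ with Lipschitz gradient, full column rank of the constraint matrices) are all met, and only the constant-tracking has to be redone in the matrix norm.

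Next I would record the subproblem optimality conditions for Algorithm~\ref{alg4}, namely $0\in\partial f(B^{k+1})+\mu(B^{k+1}-C^k-\Lambda^k)$, $0=\nabla g(C^{k+1})-\mu(B^{k+1}-C^{k+1}-\Lambda^k)$, and $\Lambda^{k+1}=\Lambda^k-\gamma(B^{k+1}-C^{k+1})$, together with the KKT relations at the assumed saddle point, $B^*=C^*$, $0\in\partial f(B^*)-\mu\Lambda^*$, $\nabla g(C^*)+\mu\Lambda^*=0$. I would then introduce the merit function $\mathcal{E}^k:=\tfrac{1}{\gamma\mu}\|\Lambda^k-\Lambda^*\|_F^2+\|C^k-C^*\|_G^2$ with $G:=\mu I-\mu^2 H_g$ (a positive multiple of $\tfrac1\mu I-H_g$), which is positive definite exactly when $\tfrac1\mu-\|H_g\|>0$, a consequence of the displayed hypothesis; on that event $\mathcal{E}^k$ is equivalent to $\|(C^k,\Lambda^k)-(C^*,\Lambda^*)\|_F^2$. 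The core step is to expand $\mathcal{E}^k-\mathcal{E}^{k+1}$, insert the optimality conditions, invoke monotonicity of $\partial f$ for the two $f$-terms, use the exact identity $g(C^{k+1})=g(C^*)+\langle\nabla g(C^*),C^{k+1}-C^*\rangle+\tfrac12\|C^{k+1}-C^*\|_{H_g}^2$ (valid because $g$ is quadratic), and absorb the cross terms by Young's inequality, collecting everything into $\mathcal{E}^k-\mathcal{E}^{k+1}\ge\rho\big(\|B^{k+1}-C^{k+1}\|_F^2+\|C^{k+1}-C^k\|_F^2\big)$; a short computation then shows $\rho>0$ holds if and only if $\dfrac{\max(\alpha\mu,\beta\mu)}{\tfrac1\mu-\|H_g\|}+\gamma<2$, where $\alpha,\beta$ enter as the effective penalty weights $\alpha\mu$, $\beta\mu$ on the two blocks of $f$ carried over from \eqref{prob18}.

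Finally I would close the argument in the standard way. Monotonicity of $\mathcal{E}^k$ gives boundedness of $\{(C^k,\Lambda^k)\}$, and then of $\{B^k\}$ via $B^{k+1}-C^{k+1}=\tfrac1\gamma(\Lambda^k-\Lambda^{k+1})$; telescoping the descent estimate forces $\|B^{k+1}-C^{k+1}\|_F\to 0$ and $\|C^{k+1}-C^k\|_F\to 0$. Along a convergent subsequence $W^{k_j}\to\bar W$, closedness of $\partial f$ and continuity of $\nabla g$ show $\bar W$ satisfies the KKT system, so $\bar W$ is itself a saddle point; re-running the merit inequality with $\bar W$ in place of $W^*$ (legitimate, since only the saddle-point property of $W^*$ was used) makes the corresponding $\mathcal{E}^k$ monotone with a subsequence tending to $0$, hence $\mathcal{E}^k\to 0$ and $W^k\to\bar W$, which we identify with $W^*$ to obtain $\lim_k\|W^k-W^*\|_F^2=0$. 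The main obstacle is the middle step — the exact bookkeeping of the descent estimate, in particular handling the possibly non-strongly-convex $g$ through the $H_g$-weighted metric $G$ and verifying that the descent coefficient $\rho$ degenerates to zero precisely at the stated threshold; the vectorization and the limiting argument are routine once that estimate is in hand.
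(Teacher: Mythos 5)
Your plan is sound in outline, but it takes a genuinely different (and far more self-contained) route than the paper. The paper's proof does no descent analysis at all: it writes down the KKT conditions at the assumed saddle point, verifies that $f$ and $g$ are convex as sums of norms and a quadratic, and then invokes Theorem 2.3 of \cite{deng2012} twice --- once (via its remark 3, condition (i)) for boundedness of $\{W^k\}$ and once for the existence of a convergent subsequence whose limit is a KKT point. In particular the paper never constructs a Lyapunov function, never derives the descent inequality, and never actually shows where the displayed condition on $\mu$, $\gamma$, $\|H_g\|$, $\alpha$, $\beta$ enters; it is simply carried along as a hypothesis matching the cited theorem. What you propose is essentially to unpack the proof of that cited theorem in the present matrix setting: vectorize so that Frobenius geometry becomes Euclidean, set up the weighted merit function $\mathcal{E}^k$ with metric $G=\mu I-\mu^2 H_g$, obtain the telescoping descent estimate, and close with the standard subsequential argument (identify the limit as a saddle point, then rerun the merit inequality at that limit to upgrade subsequential to full convergence). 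Your version buys a self-contained argument and, importantly, would actually exhibit the role of the stated threshold, which the paper leaves opaque; the paper's version buys brevity at the cost of leaving the reader to check that the hypotheses of \cite{deng2012} really reduce to the displayed inequality.

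Two cautions. First, your identification of $H_g=\nabla^2 g$ with ``the operator on the left-hand side of \eqref{eq:normal}'' is not quite right: that operator also contains the $\alpha\mu\,\Psi^{\dagger}\Psi$ and $\beta\mu\,\Psi^{\dagger}\Psi$ blocks coming from the augmented-Lagrangian penalties in \eqref{prob18}, whereas $\nabla^2 g$ is only the data-fidelity block $\bigl(\sum_t \hat{x}_{t,i}\hat{x}_{t,j}(\Phi_t\mathcal{F})^{\dagger}\Phi_t\mathcal{F}\bigr)_{i,j}$; this matters because your metric $G$ and your claimed threshold both depend on which operator you call $H_g$. Second, the claim that ``a short computation shows $\rho>0$ if and only if'' the exact inequality $\max(\alpha\mu,\beta\mu)/(\tfrac{1}{\mu}-\|H_g\|)+\gamma<2$ is the step you must actually carry out and cannot take on faith: the paper never derives this inequality either, and it is not of the standard form produced by the Deng--Yin analysis (which typically yields conditions such as $\gamma$ bounded in terms of strong-convexity and Lipschitz moduli), so you should be prepared for the bookkeeping to produce a condition that is sufficient for, but not identical to, the one printed in the theorem.
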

\begin{proof}
%Since both functions $f$ and $g$ are convex, from \cite{glowinski1984}, we know that $\{W^k\} := \{B^k, C^k, \Lambda^k\}$ is guaranteed to be bounded.
First, we assume there exists a saddle point $W^*$, which means $(B^*,C^*,\Lambda^*)$ satisfies the KKT conditions of problem \eqref{eq:copt}:
\begin{equation*}
\Lambda^* \in \partial f(B^*), \hspace{5mm} \Lambda^* \in \partial g(C^*), \hspace{5mm} B^*-C^*=\mathbf{0}.
\end{equation*}

Second, we verify that both functions $f(\cdot)$ and $g(\cdot)$ are convex.  We have $f(B) = \alpha \sum_{i=1}^{n} \| \mathbf{e}_{(i)} \Psi(B) \|_2 + \beta \sum_{j=1}^{d} \| \Psi(B) \mathbf{e}_j \|_1$ and $g(C) = \sum_{t=1}^{l} \frac{1}{2} \| \mathbf{z}_t - \Phi_t \mathcal{F} (C \hat{\mathbf{x}}_t) \|_2^2$.  Since $\| \cdot \|_p$ is convex when $p \ge 1$, and the fact sum of convex functions is also convex, we can easily verify that both functions are convex.

Third, based on Theorem 2.3 remark 3 condition (i) in \cite{deng2012}, we know that the sequence $\{W^k\}$ is bounded. 

Having obtained these assumptions, it follows from Theorem 2.3 in \cite{deng2012} that $\{W^k\}$ has a converging subsequence $\{W^{k'}\}$, whose limit is $W^{*} := \lim_{k' \rightarrow \infty} W^{k'}$.  Hence we have global convergence.
\end{proof}

%\begin{theorem}[Global Linear Convergence]
%The sequence generated by Algorithm 4 $\{W^k\} := \{B^k,C^k,\Lambda^k\}$ has global linear convergence, moreover the convergence rate is R-linear.
%\end{theorem}

%Theorem 4.5 is an immediate result of Theorem 3.6 in \cite{deng2012}. In our case, $P = 0$ and $Q = 0$, scenario 1 in Table 1.2 \cite{deng2012}.

\section{Dynamic MRI Reconstruction Quality}

\subsection{Impact of Sampling Strategies}

We now apply our algorithm to accelerate the acquisition process of dynamic MRI.  Since the sampling strategy in the k-space has an impact on the reconstruction quality, we test three types of sampling strategies following the work of \cite{krahmer2012}.

\begin{figure}[H]
\begin{center}
\includegraphics[width = 0.98\textwidth]{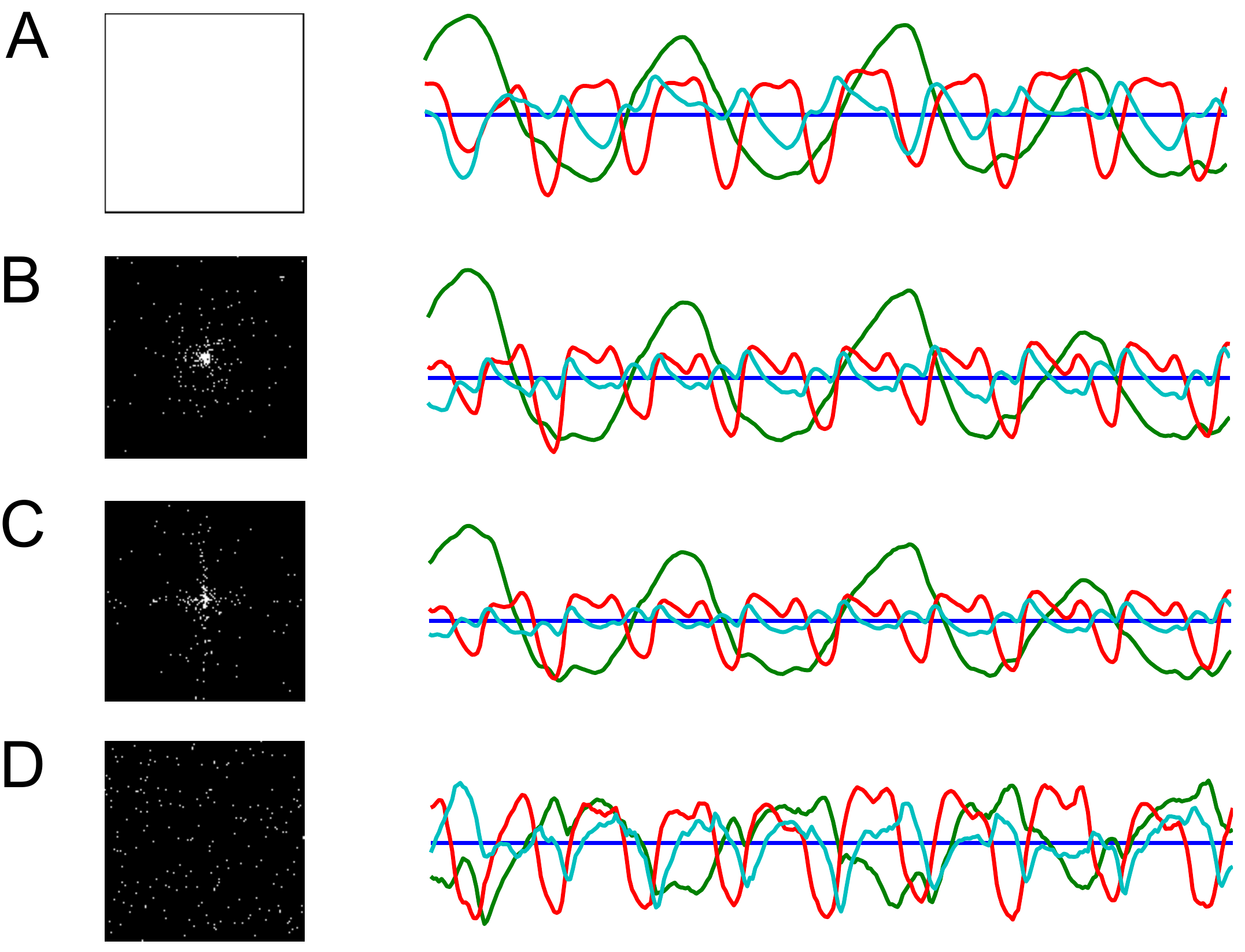}
\end{center}
\caption{Impact of measurement matrix on the state sequence estimate, sampling distribution.  Illustrated are four measurement matrices design for the time-invariant component $\bar{\Phi}$, with all compressive sensing cases using 200 samples: (A) measurement matrix covering all samples in k-space, (B) measurement matrix following \emph{distance} distribution, (C) measurement matrix with \emph{hyperbolic} distribution, (D) measurement matrix with \emph{uniform} distribution.  Left column shows the measurement matrix, right column shows the estimated state sequence using Algorithm 1.}
\label{fig:heartstate}
\end{figure}

We first illustrate the impact of measurement matrix on the state sequence estimate.  Figure~\ref{fig:heartstate} shows three measurement matrices that cover all range of frequencies, however follow different probability distributions:  
\begin{itemize}
\item{\emph{distance}:}\\
probability of sampling falls over as inverse of squared distance to the k-space center.
\item{\emph{hyperbolic}:}\\
probability of sampling falls over as a hyperbolic function in the k-space.
\item{\emph{uniform}:}\\
probability of sampling is uniform in the k-space.
\end{itemize}

\begin{figure}[H]
\begin{center}
Sampling Strategy = \emph{distance} \hspace{5mm} SNR = 19.1 dB \\
\vspace{5mm}
\includegraphics[width = 0.98\textwidth]{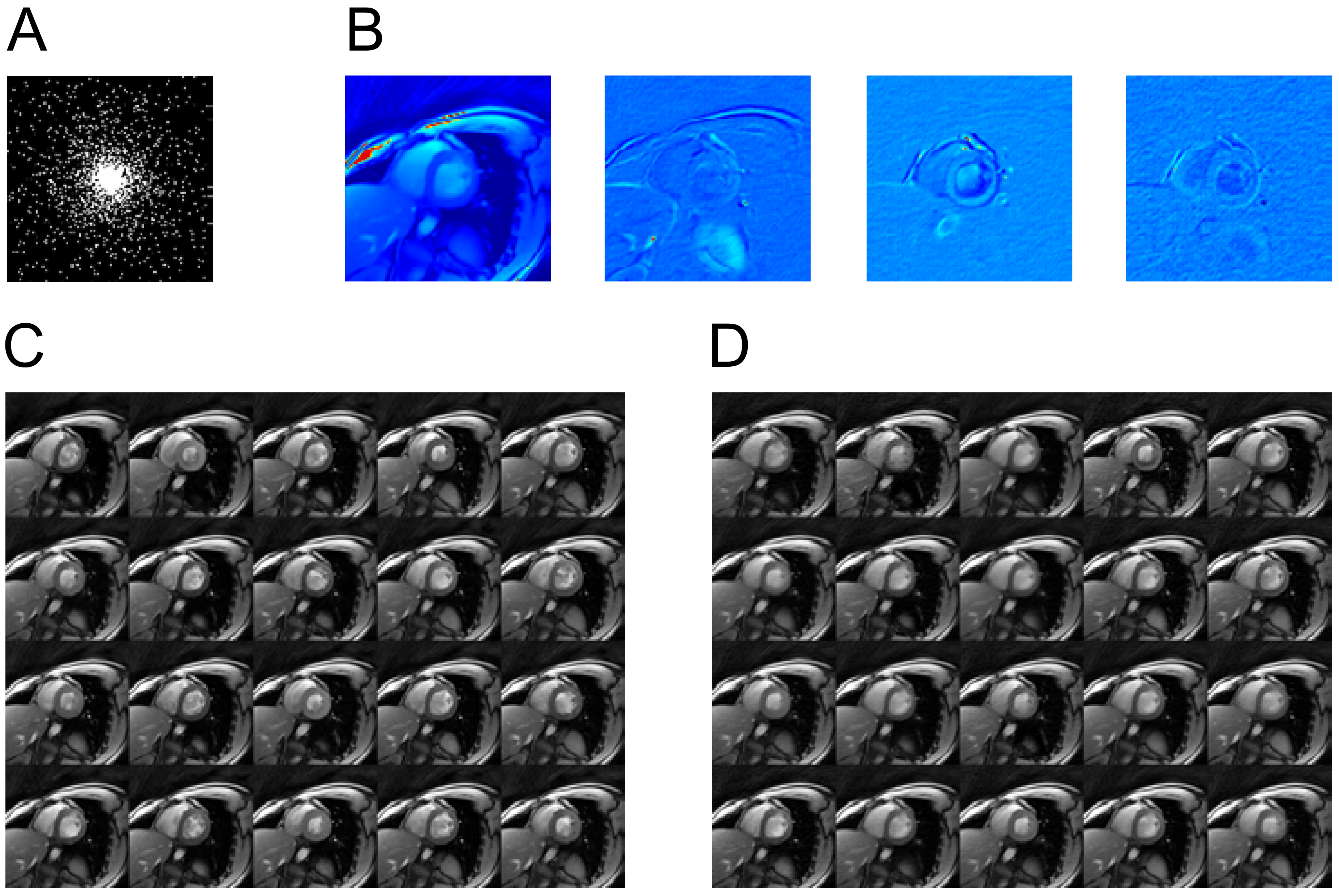}
\end{center}
\caption{Reconstruction result for dynamic MRI data, based on $10\%$ k-space data.  Sampling strategy adopts the \emph{distance} distribution. Shown are (A) measurement matrix $\Phi_t$ at $t=1$, (B) observation matrix $C$ with $d = 4$, (C) original video frames $\mathbf{y}_t$ (D) reconstructed video frames $\hat{\mathbf{y}}_t$.  The reconstruction SNR is 19.1 dB.}
\label{fig:heartdistance}
\end{figure}

Our numerical results indicate the best measurement matrix design is to sample k-space according to the \emph{distance} strategy, where one samples the k-space in a density that falls off as 1 over the squared distance to the center of k-space.

We show the reconstruction quality of dynamic MRI using different sampling strategies.  The cardiac MRI dataset used in this experiment was described in \cite{zhang2010}.  We obtained the reconstructed video for real-time MRI of a human heart, whose spatial resolution is subsampled at 128$\times$128 and temporal resolution is 33 ms, with 300 frames in total.

Note we only simulated a single coil with a homogeneous coil sensitivity map.  We simulated k-t data by taking the Fourier transform and performing subsampling.  Define the samples in Fourier space as
\begin{equation}
\Omega = \{(\omega_1^k, \omega_2^k)\}_{k=1}^{m} \subset \{-\frac{n_x}{2}+1,\hdots,\frac{n_x}{2}, -\frac{n_y}{2}+1,\hdots,\frac{n_y}{2}\}
\end{equation}
assuming a uniform Cartesian grid.

\begin{figure}[H]
\begin{center}
Sampling Strategy = \emph{hyperbolic} \hspace{5mm} SNR = 15.3 dB \\
\vspace{5mm}
\includegraphics[width = 0.98\textwidth]{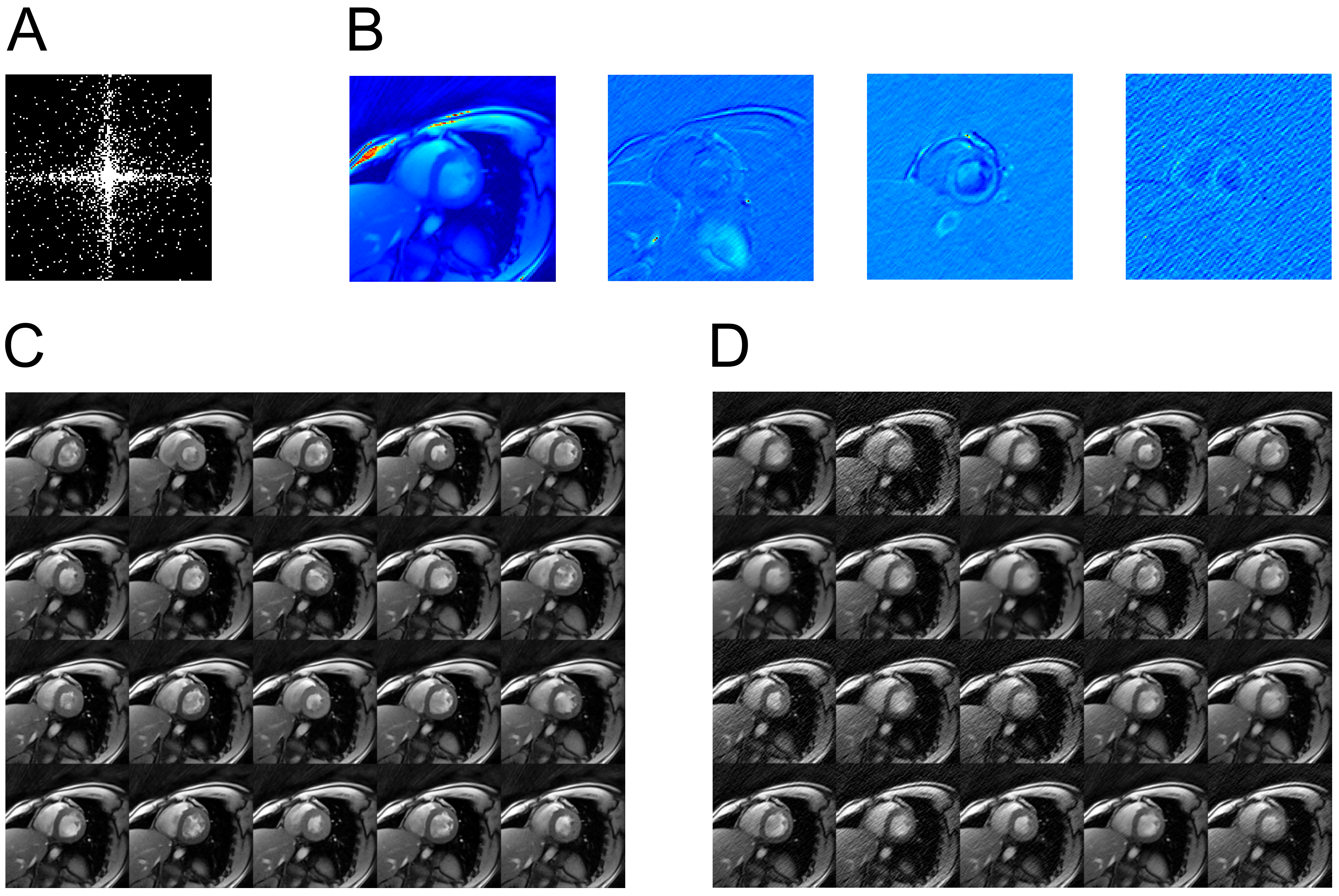}
\end{center}
\caption{Reconstruction result for dynamic MRI data, based on $10\%$ k-space data. Sampling strategy adopts the \emph{hyperbolic} distribution. Shown are (A) measurement matrix $\Phi_t$ at $t=1$, (B) observation matrix $C$ with $d = 4$, (C) original video frames $\mathbf{y}_t$ (D) reconstructed video frames $\hat{\mathbf{y}}_t$.  The reconstruction SNR is 15.3 dB.}
\label{fig:hearthyperbolic}
\end{figure}

We show the reconstruction result for dynamic MRI with $10\%$ k-t data, using the \emph{distance} sampling strategy in Figure~\ref{fig:heartdistance}.  We construct $\Omega$ by subsampling the Fourier space i.i.d.\ according to density
\begin{equation}
\eta(\omega_1,\omega_2) \propto (\omega_1^2 + \omega_2^2+1)^{-1}.
\end{equation}
We attain a SNR of 19.1 dB in the reconstruction using the \emph{distance} sampling strategy.

We show the reconstruction result for the dynamic MRI with $10\%$ k-t data, using the \emph{hyperbolic} sampling strategy in Figure~\ref{fig:hearthyperbolic}.  We construct $\Omega$ by subsampling the Fourier space i.i.d.\ according to density
\begin{equation}
\eta(\omega_1,\omega_2) \propto (\omega_1^2 + \omega_2^2+1)^{-3/2}.
\end{equation}
We attain a SNR of 15.3 dB in the reconstruction using the \emph{hyperbolic} sampling strategy.

\newpage

\begin{figure}[H]
\begin{center}
Sampling Strategy = \emph{uniform} \hspace{5mm} SNR = 7.1 dB \\
\vspace{5mm}
\includegraphics[width = 0.98\textwidth]{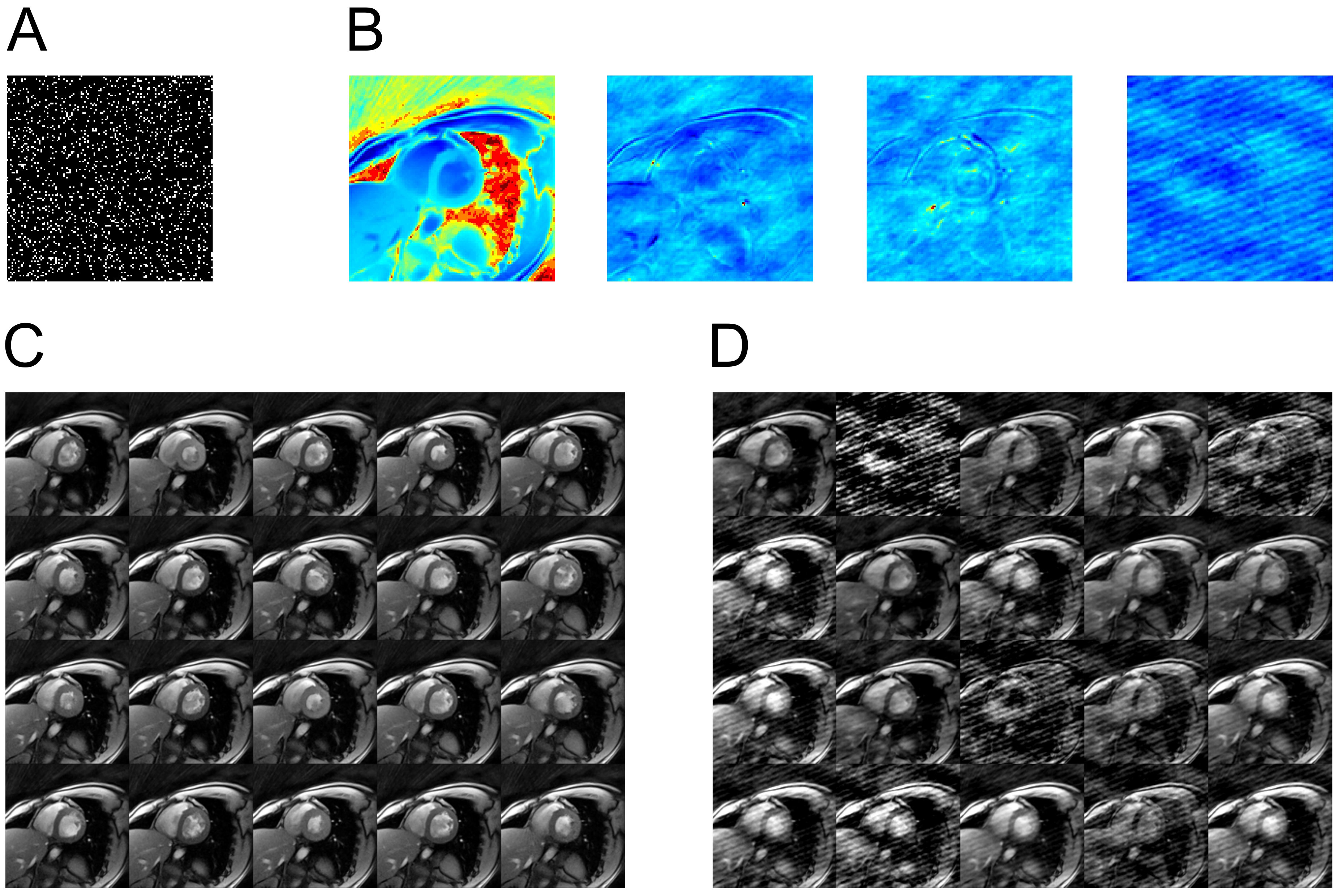}
\end{center}
\caption{Reconstruction result for dynamic MRI data, based on $10\%$ k-space data. Sampling strategy adopts the \emph{uniform} distribution. Shown are (A) measurement matrix $\Phi_t$ at $t=1$, (B) observation matrix $C$ with $d = 4$, (C) original video frames $\mathbf{y}_t$ (D) reconstructed video frames $\hat{\mathbf{y}}_t$. The reconstruction SNR is 7.1 dB.}
\label{fig:heartuniform}
\end{figure}

We show the reconstruction result for the dynamic MRI with $10\%$ k-t data, using the \emph{uniform} sampling strategy in Figure~\ref{fig:heartuniform}.  We construct $\Omega$ by subsampling the Fourier space i.i.d.\ according to density
\begin{equation}
\eta(\omega_1,\omega_2) \propto 1.
\end{equation}
We attain a SNR of 7.1 dB in the reconstruction using the \emph{uniform} sampling strategy.

\subsection{Comparison with Prior Art}

We compare kt-CSLDS with prior art in the literature, which includes kt-SPARSE, MASTeR, and L+S.  Figure~\ref{fig:heart1comp} and Figure~\ref{fig:heart2comp} show the numerical results on two datasets described in \cite{zhang2010}.  Both datasets can be downloaded from the paper website provided by the authors.  Our numerical results show that kt-CSLDS achieves excellent reconstruction quality.

\newpage

\begin{figure}[H]
\begin{center}
\includegraphics[width = 0.98\textwidth]{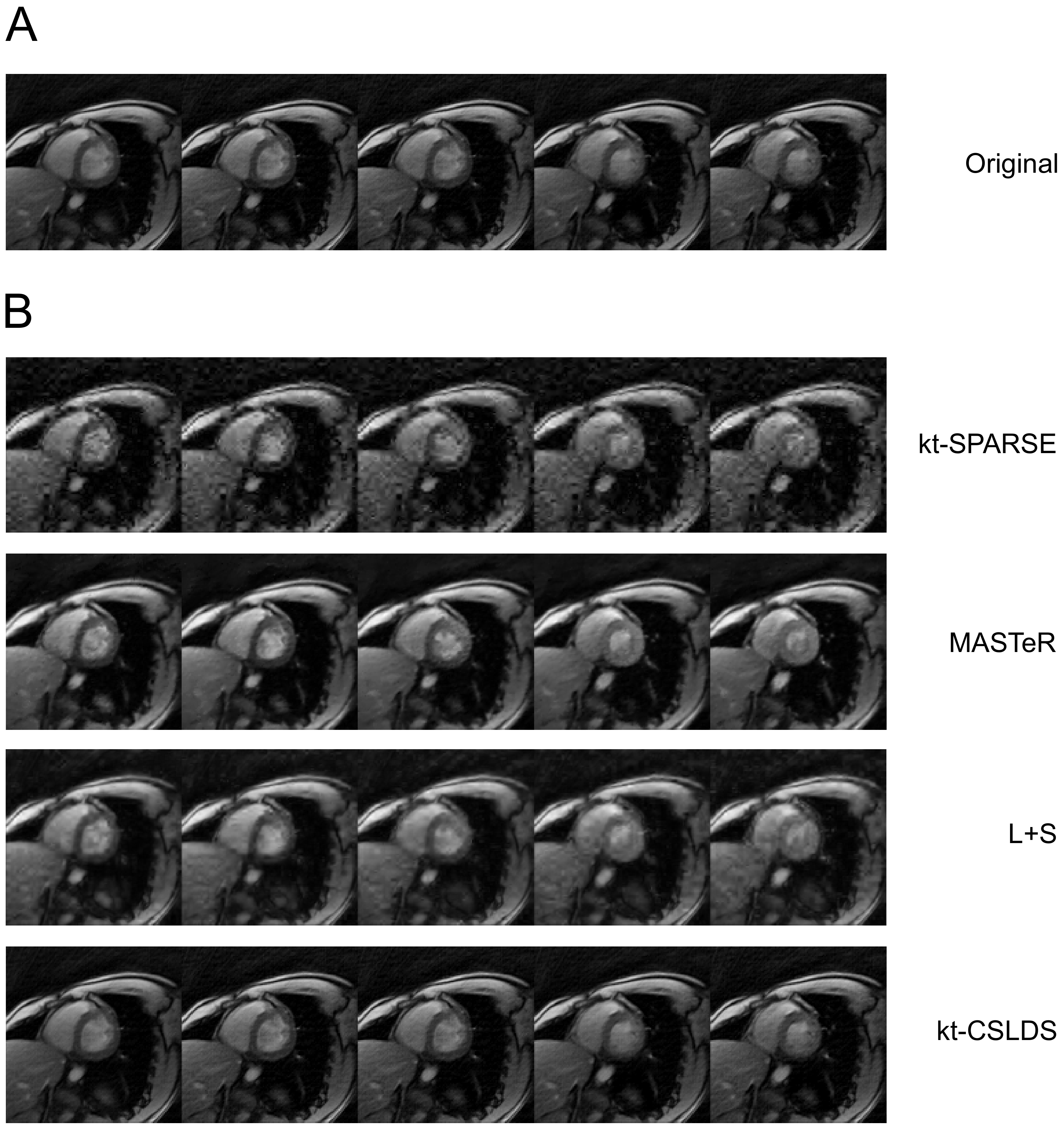}
\end{center}
\caption{Comparison of kt-CSLDS with prior art in the literature.  Numerical results are based on one dataset for dynamic heart imaging, with 1.5 mm resolution, 8 mm section thickness. The original dynamic MRI is acquired at 30 ms acquisition time with 300 frames.  We downsample the heart video to 128$\times$128 spatial resolution and simulate a single coil acquisition.  We use 10$\times$ compression rate for this experiment, and employ the \emph{distance} sampling strategy for compressive measurement.  (A) Sample frames from the original heart video.  (B) Reconstructed frames based on different video compressive sensing algorithms.  Their  respective reconstruction SNRs are as follows: kt-SPARSE (13.0 dB), MASTeR (18.8 dB), L+S (15.8 dB), kt-CSLDS (19.1 dB).}
\label{fig:heart1comp}
\end{figure}

\begin{figure}[H]
\begin{center}
\includegraphics[width = 0.98\textwidth]{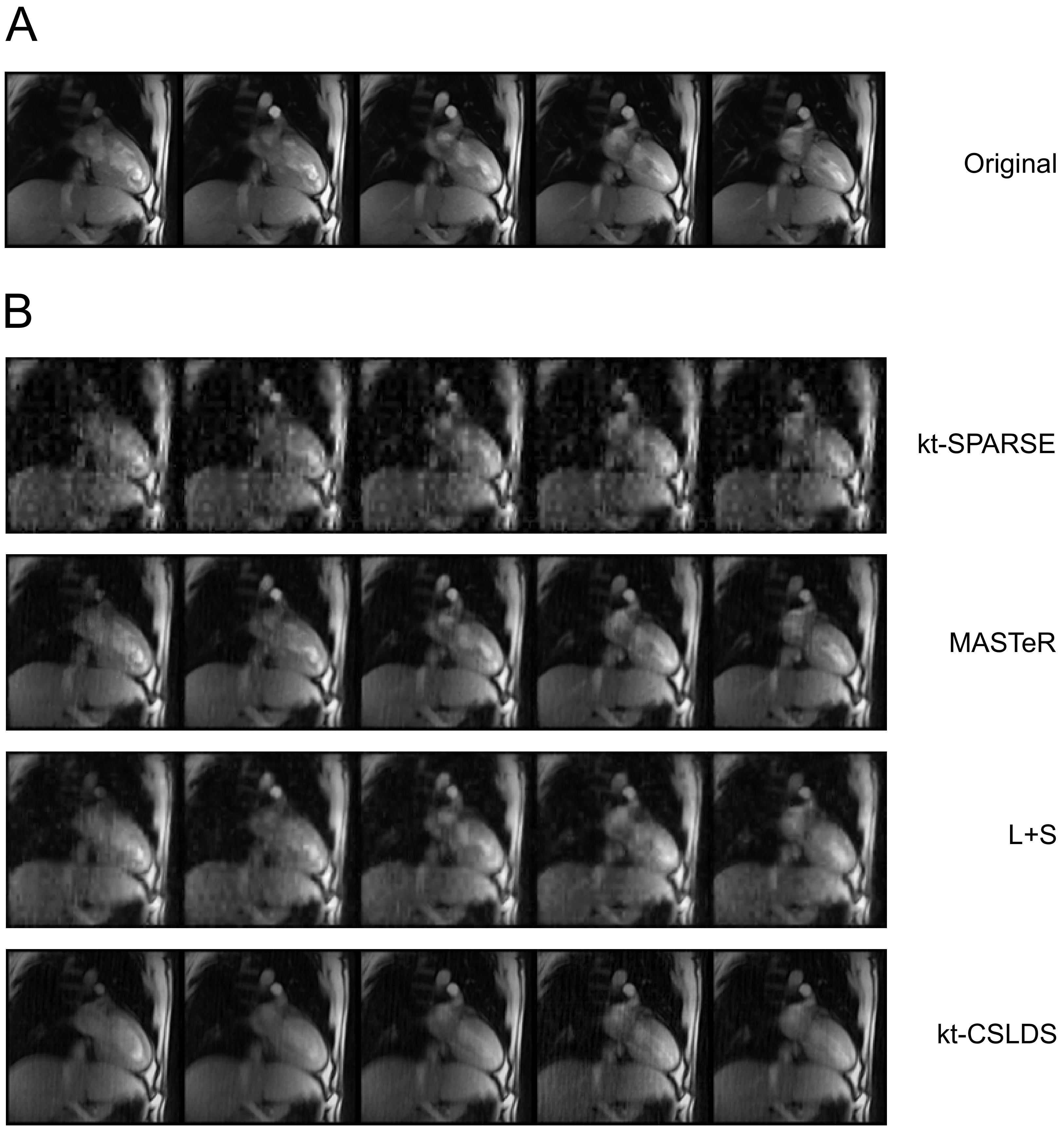}
\end{center}
\caption{Comparison of kt-CSLDS with prior art in the literature.  Numerical results are based on another dataset for dynamic heart imaging, with 2.0 mm resolution, 8 mm section thickness. The original dynamic MRI is acquired at 22 ms acquisition time with 360 frames.  We downsample the heart video to 128$\times$128 spatial resolution and simulate a single coil acquisition.  We use 10$\times$ compression rate for this experiment and employ the \emph{distance} sampling strategy for compressive measurement.  (A) Sample frames from the original heart video.  (B) Reconstructed frames based on different video compressive sensing algorithms.  Their respective reconstruction SNRs are as follows: kt-SPARSE (14.0 dB), MASTeR (19.4 dB), L+S (16.3 dB), kt-CSLDS (20.3 dB).}
\label{fig:heart2comp}
\end{figure}

Table~\ref{tab:snr} compares the reconstruction SNR of different video compressive sensing models under various compression rate.  Table~\ref{tab:time} shows their respective computation time.  In comparison, kt-CSLDS achieves the best reconstruction quality while consuming the least computational time.

\begin{table}
\caption{Comparison of reconstruction SNR}
\begin{center}
\begin{tabular}{cccccc}
\hline\hline
Model & 10$\times$ & 20$\times$ & 30$\times$ & 40$\times$ & 50$\times$ \\
\hline
kt-SPARSE & 13.0 dB & 11.3 dB & 10.5 dB & 9.9 dB & 9.5 dB \\
MASTeR    & 18.8 dB & 14.6 dB & 12.7 dB & 11.6 dB & 11.0 dB \\
L+S            & 15.8 dB & 12.4 dB & 10.7 dB & 9.7 dB & 9.3 dB \\
kt-CSLDS  & 19.1 dB & 16.3 dB & 15.0 dB & 13.3 dB & 12.8 dB  \\
\hline\hline
\end{tabular}
\end{center}
\label{tab:snr}
\end{table}

\begin{table}
\caption{Comparison of computation time}
\begin{center}
\begin{tabular}{cccccc}
\hline\hline
Model & 10$\times$ & 20$\times$ & 30$\times$ & 40$\times$ & 50$\times$ \\
\hline
kt-SPARSE & 371.7 s & 401.9 s & 455.0 s & 491.1 s & 585.2 s \\
MASTeR    & 422.6 s & 430.9 s & 425.7 s & 423.6 s & 433.9 s \\
L+S            & 1390.2 s & 1403.6 s & 1402.8 s & 1406.4 s & 1406.4 s \\
kt-CSLDS   & 7.6 s & 5.0 s & 4.9 s & 4.5 s & 4.4 s \\
\hline\hline
\end{tabular}
\end{center}
\label{tab:time}
\end{table}

\section{Conclusions}

In this paper, we built upon video compressive sensing ideas to accelerate the imaging acquisition process of dynamic MRI.  We extended CS-LDS model to the Fourier-time space, resulting in so-called kt-CSLDS.  Efficient numerical algorithm was derived based on ADMM.  Theoretical analysis was carried out to ensure global convergence.  Numerical results show that kt-CSLDS achieves favorable reconstruction quality while being computationally efficient, in comparison with state-of-the-art dynamic MRI compressive sensing literature.

LDS provides a compact model for video sequences, which approximates high-dimensional signal using low-dimensional representation.  Therefore, kt-CSLDS benefits from such a compact representation, since the number of unknowns are much smaller compared with the original video cube.  This explains why our model achieves high-fidelity reconstruction results given compressive measurements.  The computational speed we gain is a result of both smaller dimensionality of the optimization problem and customized algorithm based on ADMM.  

There are many ways to build upon the current kt-CSLDS framework.  Our current methodology takes all the video data and performs batch process.  Future work seeks online version of the current reconstruction algorithm.  Regarding the measurement strategy, we have shown empirically that the best strategy is to sample the k-space according to the \emph{distance} strategy.  Such a result is consistent with theory for Fourier compressive sensing for static MR imaging \cite{krahmer2012}.  It remains an open theoretical question why such a strategy is optimal for video compressive sensing.

\section*{Acknowledgments}

We thank Rachel Ward for generously providing the code to generate the sampling patterns in the k-space.

\bibliographystyle{plain} % wotao
\bibliography{VideoCSLDSMRIRef}

\end{document}